\def\eqref#1{equation~\ref{#1}}
\def\1{\bm{1}}
\def\mX{{\bm{X}}}
\def\mY{{\bm{Y}}}
\DeclareMathAlphabet{\mathsfit}{\encodingdefault}{\sfdefault}{m}{sl}
\SetMathAlphabet{\mathsfit}{bold}{\encodingdefault}{\sfdefault}{bx}{n}
\def\gU{{\mathcal{U}}}
\def\gV{{\mathcal{V}}}
\DeclareMathOperator*{\argmin}{arg\,min}
\newcommand{\cmark}{\color{green}\ding{51}}%
\newcommand{\xmark}{\color{red}\ding{55}}%
\newcommand{\GL}{\mathrm{GL}}
\renewcommand{\O}{\mathrm{O}}
\newcommand{\GLNet}{$\GL$-net\xspace}
\newcommand{\RR}{\mathbb{R}}
\newcommand{\norm}[1]{\left\lVert#1\right\rVert}
\newcommand{\PP}{\mathbb{P}}
\newcommand\haggai[1]{\textcolor{red}{\textbf{Haggai:} #1 }}
\newcommand\moe[1]{\textcolor{blue}{\textbf{Moe:} #1 }}
\newcommand\derek[1]{\textcolor{purple}{\textbf{DL:} #1 }}
\newcommand\sj[1]{\textcolor{magenta}{\textbf{SJ:} #1 }}
\newcommand\yoav[1]{\textcolor{orange}{\textbf{YG:} #1 }}
\renewcommand{\haggai}[1]{}
\renewcommand{\moe}[1]{}
\renewcommand{\derek}[1]{}
\renewcommand{\sj}[1]{}
\renewcommand{\yoav}[1]{}
\newtheorem{lemma}{Lemma}
\newtheorem{theorem}{Theorem}
\newtheorem{proposition}{Proposition}
\newtheorem{definition}{Definition}[section]
\title{Learning on LoRAs: \\ GL-Equivariant Processing of Low-Rank \\ Weight Spaces for Large Finetuned Models}
\author{Theo (Moe) Putterman  \thanks{Equal contribution} \\
UC Berkeley \\
\texttt{moeputterman@berkeley.edu} \\
\And
Derek Lim
\footnotemark[1]
\\
MIT\thanks{EECS, CSAIL} \\
\texttt{dereklim@mit.edu} \\
\AND
Yoav Gelberg \\
University of Oxford \\
\And
Stefanie Jegelka\\
TU Munich\thanks{CIT, MCML, MDSI}, MIT\footnotemark[2]\\
\And
Haggai Maron\\
Technion, NVIDIA
}
\begin{document}

\maketitle

\begin{abstract}
Low-rank adaptations (LoRAs) have revolutionized the finetuning of large foundation models, enabling efficient adaptation even with limited computational resources. The resulting proliferation of LoRAs presents exciting opportunities for applying machine learning techniques that take these low-rank weights themselves as inputs. 
In this paper, we investigate the potential of Learning on LoRAs (LoL), a paradigm where LoRA weights serve as input to machine learning models.
For instance, an LoL model that takes in LoRA weights as inputs could predict the performance of the finetuned model on downstream tasks, detect potentially harmful finetunes, or even generate novel model edits without traditional training methods.  We first identify the inherent parameter symmetries of low rank decompositions of weights, which differ significantly from the parameter symmetries of standard neural networks.
To efficiently process LoRA weights, we develop several symmetry-aware invariant or equivariant LoL models, using tools such as canonicalization, invariant featurization, and equivariant layers.
We finetune thousands of text-to-image diffusion models and language models to collect datasets of LoRAs. In numerical experiments on these datasets, we show that our LoL architectures are capable of processing low rank weight decompositions to predict CLIP score, finetuning data attributes, finetuning data membership, and accuracy on downstream tasks. 
\end{abstract}

\section{Introduction}

Finetuning of pretrained models such as Large Language Models  \citep{devlin2019bertpretrainingdeepbidirectional, brown2020language, dubey2024llama} and Diffusion models (\cite{ho2020denoisingdiffusionprobabilisticmodels}, \cite{rombach2022highresolutionimagesynthesislatent}) for improved performance on tasks such as generating images in a specific style or creating text for mathematical proofs has become an extremely common paradigm in deep learning. While full finetuning of all weights effectively boosts model capabilities, it requires a large amount of memory and computation time.  In recent years, Low Rank Adapation  (LoRA) \citep{hu2021loralowrankadaptationlarge}, a method for finetuning  where a learnable low rank decomposition is added to each weight matrix ($W_i \mapsto W_i + U_i V_i^\top$), has been used as an alternative to other finetuning methods thanks to its increased efficiency. LoRA finetuning and its variants have become widespread; there are now software packages~\citep{peft, unsloth}, paid services~\citep{replicate}, and online communities~\citep{civitai} in which countless LoRAs are trained and shared.

Given the ubiquity of LoRA weights, one can imagine treating them as a data type. As in recent works in the emerging field of weight-space learning, we can process the weights of input neural networks by using other neural networks (often termed metanetworks~\citep{lim2023graphmetanetworksprocessingdiverse}, neural functionals~\citep{zhou2024permutation}, or deep weight space networks~\citep{navon2023equivariantarchitectureslearningdeep}).

In this work, we are the first to extensively study applications, theory and architectures for Learning on LoRAs (LoL), which encompasses any tasks where LoRA weights are the input to some predictive model.
An LoL model acting on a LoRA could predict various useful properties of the underlying finetuned model. For instance, given LoRA weights of a finetuned model, an LoL model could predict the downstream accuracy of the model on some task, predict properties of the (potentially private) training data used to finetune the model, or edit the LoRA to work in some other setting. Recently, \citet{salama2024datasetsizerecoverylora} and \citet{dravid2024interpreting} also consider learning tasks on LoRA weights, for some specific applications and a few types of models or learning algorithms.

When developing LoL architectures for processing LoRA weights, we account for the structure and symmetries of this unique data type. For one, the low rank decomposition $(U, V)$ generally has significantly fewer parameters ($(n+m)r$) than the dense matrix $UV^\top$ ($nm$). This can be leveraged to more efficiently learn tasks on LoRA weights. Moreover, for any invertible matrix $R \in \GL(r)$, we have that $URR^{-1} V^\top = UV^\top$, so the low rank decompositions $(U, V)$ and $(U R, VR^{-\top})$ are functionally equivalent.  Because this transformation $\tau_R$ does not affect the underlying function represented by each LoRA, almost all relevant LoL problems are invariant to $\tau_R$. Therefore, the outputs of an effective model for any such task should be invariant to $\tau$.

\begin{figure}[ht]
    \centering
    \includegraphics[width=0.95\linewidth]{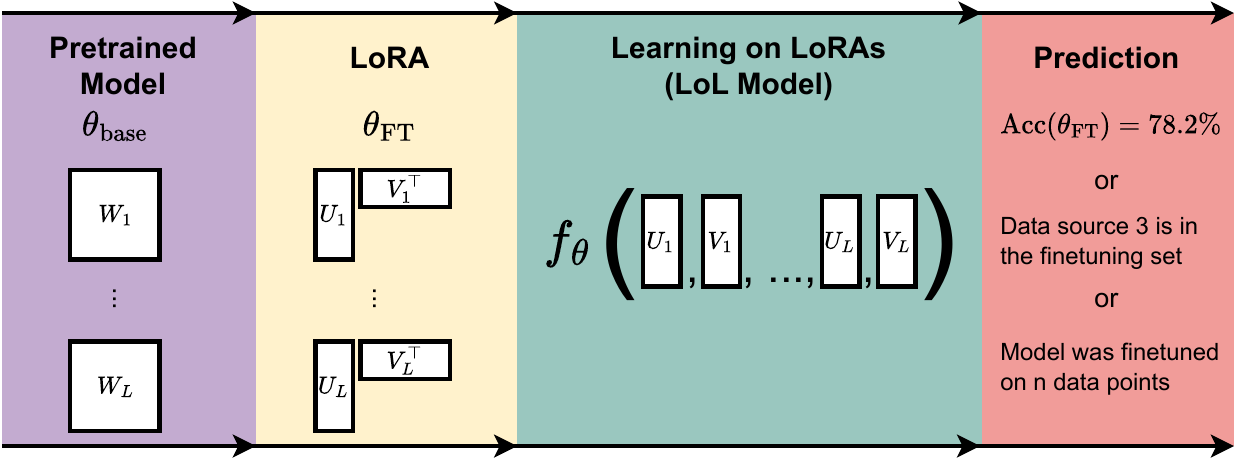}
    \caption{Overview of Learning on LoRAs (LoL). A pretrained model $\theta_{\mathrm{base}}$ is finetuned to yield LoRA weight matrices $U_1, V_1, \ldots, U_L, V_L$. These LoRA weights are taken as input to an LoL model $f_\theta$, which can make predictions such as the downstream accuracy of the finetuned model.}
    \label{fig:main_diagram}
\end{figure}

To this end, motivated by previous successes in invariant and equivariant weight space learning~\citep{navon2023equivariantarchitectureslearningdeep, kofinas2024graph, lim2023graphmetanetworksprocessingdiverse, zhou2024universal, kalogeropoulos2024scale} (and geometric deep learning in general~\citep{bronstein2021geometric}), we propose several $\GL(r)$ equivariant and invariant neural architectures that can effectively process weights of LoRAs. Using various techniques from geometric deep learning such as canonicalization, invariant featurization, and equivariant linear maps, we develop several LoL models with various trade-offs in terms of efficiency, expressivity, and generalization. 

To explore the feasibility of Learning on LoRAs tasks, and to analyze the effectiveness of our proposed architectures, we conduct experiments across various finetuned models. First, we create novel datasets for Learning on LoRAs; we train thousands of diverse LoRAs, which are finetuned from text-to-image diffusion generative models and language models. We then train LoL models on these LoRAs for prediction tasks such as: predict the CLIP score~\citep{hessel2022clipscorereferencefreeevaluationmetric} of a Stable Diffusion finetune given only its LoRA weights, predict attributes such as facial hair presence of the person that a diffusion model was personalized to, predict which data sources a language model LoRA was finetuned on, and predict the downstream reasoning accuracy of a language model LoRA. We find that simple $\GL(r)$-invariant LoL models can often perform some of these tasks well, and equivariant-layer based LoL models can do well across most tasks.

\section{Background and Related Work}\label{sec:background}

\paragraph{LoRA background and symmetries}
\citet{hu2021loralowrankadaptationlarge} introduced the LoRA method, which is a parameter-efficient method for finetuning (typically large) models~\citep{peft, han2024parameter}. Consider a weight matrix $W \in \RR^{n \times m}$ of some pretrained neural network. Directly finetuning $W$ would require training $nm$ parameters on additional data, which could be quite expensive. LoRA instead trains low-rank matrices $U \in \RR^{n \times r}$ and $V \in \RR^{m \times r}$ so that the new finetuned weight matrix is given by $W + UV^\top$. This only requires tuning $(n+m)r$ parameters, which is efficient as the rank $r$ is taken to be significantly lower than $n$ or $m$ (for instance a rank of $r=8$ can be sufficient to finetune a 7B-parameter language model with $n=m=4096$~\citep{liu2024dora}). %

As mentioned in the introduction, for any invertible matrix $R \in \GL(r)$, $W + (UR)(VR^{-\top})^\top = W + UV^\top$, so the LoRA update given by  $(UR, VR^{-\top})$ is functionally equivalent to the one given by $(U, V)$. As a special case, when $Q \in \O(r)$ is orthogonal, $(UQ, VQ)$ is also functionally equivalent. Many variants of the original LoRA work have been proposed, and they often have comparable symmetries that can be handled similarly in our framework~\citep{peft, han2024parameter}; we discuss LoRA variants and their symmetries in Appendix~\ref{appendix:lora_variants}.

Prior works have also studied continuous symmetries~\citep{thomas2018tensor, Bogatskiy2020LorentzGE, satorras2022en, lim2023sign, pearcecrump2023brauer, lawrence2024learning} such as rotation and scaling symmetries, especially for applications in the chemical and physical sciences. However, they study different classes of symmetries, such as $\O(n)$, $\mathrm{E}(n)$, and $\mathrm{SP}(n)$, none of which contain $\GL(n)$. To the best of our knowledge, we are the first to study $\GL(n)$ equivariant and invariant architectures.

\paragraph{Weight-space learning and metanetworks} Several neural network architectures have been developed that take in neural network weights as input~\citep{unterthiner2020predicting, eilertsen2020classifying, metz2022velo, peebles2022learning, navon2023equivariantarchitectureslearningdeep,navon2023equivariant, andreis2023set,shamsian2024improved}. In many tasks, equivariance or invariance to parameter transformations that leave the network functionally unchanged has been found to be useful for empirical performance of weight-space networks~\citep{navon2023equivariantarchitectureslearningdeep, zhou2024permutation, lim2023graphmetanetworksprocessingdiverse, kalogeropoulos2024scale, tran2024monomial}. However, these works are not specialized for LoRA weight spaces, since they consider different symmetry groups: many such works focus only on discrete permutation symmetries~\citep{navon2023equivariantarchitectureslearningdeep, zhou2024permutation, lim2023graphmetanetworksprocessingdiverse, zhou2024universal}, or scaling symmetries induced by nonlinearities~\citep{kalogeropoulos2024scale, tran2024monomial}. As covered in the previous section, LoRA weights have general invertible symmetries, which include as special cases certain permutations and scaling symmetries between $U$ and $V^\top$.
While LoRAs contain inner permutation symmetries of the form $UP^TPV^T = UV^\top$, we emphasize that they generally do not have outer permutation symmetries of the more recognizable form $P_1UV^TP_2$, which would resemble those of standard weight spaces (e.g. MLPs or CNNs) more. This is an important difference; for instance, the outer permutation symmetries significantly harm linear merging of models trained from scratch~\citep{entezari2022the,ainsworth2023git, lim2024empirical}, whereas finetuned models can be merged well with simple methods~\citep{wortsman2022model, ilharco2023editing}.

Recently, two works have explored the weight space of LoRA finetuned models for certain tasks. \citet{salama2024datasetsizerecoverylora} develop an attack that, given the LoRA weights of a finetuned model, predicts the size of the dataset used to finetune the model. Inspired by correlations between finetuning dataset size and singular value magnitudes, they define a very specific type of LoL model that takes the singular values of dense, multiplied-out LoRA weights $\sigma_1(U_i V_i^\top), \ldots, \sigma_r(U_iV_i^\top)$ as input. In contrast, we study more tasks, consider the problem of general LoL model design, and develop more expressive and efficient LoL models in our paper. In another context, \citet{dravid2024interpreting} study the LoRA weight space of finetuned personalized text-to-image diffusion models. They do not define LoL models, but instead study operations such as linear edits in the principal component space of their rank-one LoRA weights.

\section{Learning on LoRAs Architectures}\label{sec:lol_archs}
\begin{table}[!ht]
    \centering
    \caption{Properties of LoL architectures that we propose in this paper. Runtimes are for one LoRA layer with $U \in \RR^{n \times r}$ and $V \in \RR^{m \times r}$, so the rank $r$ is generally much lower than $n$ and $m$. Expressivity refers specifically to a notion of ability to fit $\GL$-invariant functions, which is formalized in Definition~\ref{def:universality}.}
    \begin{tabular}{l c c c l l} 
    \toprule
     Model & GL-Invariant & O-Invariant & Expressive & Preprocess Time & Forward Time\\ [0.5ex] 
     \midrule
     MLP & \xmark & \xmark &  \cmark & $O((m+n)r)$ & $O((m+n)r)$\\ 
      MLP + $\O$-Align & \xmark & \cmark & \cmark & $O((m+n)r^2)$ & $O((m+n)r)$\\  
      MLP + SVD & \cmark & \cmark & \xmark & $O((m+n)r^2)$ & $O((m+n)r)$\\
     MLP + Dense & \cmark & \cmark & \cmark & $O(mnr)$ & $O(mn)$\\
    \GLNet & \cmark & \cmark & \cmark & $O((m+n)r)$  & $O((m+n)r)$\\
    \bottomrule
    \end{tabular}
    \label{tab:lol_models}
\end{table}

In this section, we develop several neural network architectures for Learning on LoRAs. These have different trade-offs and properties, which we summarize in Table~\ref{tab:lol_models}. First, we mathematically formalize the LoL learning problem. Then we describe four methods based on feeding features into a simple Multi-layer Perceptron (MLP). \footnote{These features can also be processed by other generic architectures, such as recurrent or Transformer-based sequence models along the layer dimension, but we use only MLPs in the work for simplicity.} Finally, we derive \GLNet, an architecture based on various equivariant and invariant modules.

\subsection{Learning on LoRAs Models}
We define a Learning on LoRAs (LoL) model that takes LoRA updates as input as follows. Suppose there are $L$ matrices in a base model that are finetuned via LoRA, i.e. the LoRA weights are $(U_1, V_1), \ldots, (U_L, V_L)$, where $U_i \in \RR^{n_i \times r}$ and $V_i \in \RR^{m_i \times r}$. An LoL model with parameters $\theta$ and output space $\mathcal{Y}$ is a function $f_\theta: \RR^{\sum_i n_i r + \sum_i m_i r} \to \mathcal{Y}$. An LoL model can output a scalar prediction ($\mathcal{Y} = \RR$) or latent LoRA weight representations $(\tilde{U}_1, \tilde{V}_1), \ldots, (\tilde{U}_L, \tilde{V}_L)$ where $\tilde U_i \in \RR^{\tilde{n}_i \times r}$ and $\tilde V_i \in \RR^{\tilde{m}_i \times r}$ ($\mathcal{Y} = \RR^{\sum_i \tilde{n}_i r + \sum_i \tilde{m}_i r}$).

An LoL model is called \emph{$\GL$-invariant} if for all $R_1, \dots, R_L \in \GL(r)$
\begin{equation}
    f_\theta(U_1 R_1, V_1 R_1^{-\top}, \ldots, U_L R_L, V_L R_L^{-\top})  =  f_\theta(U_1, V_1, \ldots, U_L, V_L).
\end{equation}
This represents invariance to the action of the direct product $\GL(r) \times \cdots \times \GL(r)$ of $\GL(r)$ with itself $L$-times ($\GL(r)^L$). When the output space has decomposition structure, i.e. $f_\theta(U_1, V_1, \ldots, U_L, V_L) = ((\tilde{U}_1, \tilde{V}_1), \ldots, (\tilde{U}_L, \tilde{V}_L))$, we say that the model is $\GL(r)$-equivariant if for all $R_1, \dots, R_L \in \GL(r)$, 
\begin{equation}
    f_\theta(U_1 R_1, V_1 R_1^{-\top}, \ldots, U_L R_L, V_L R_L^{-\top})  =  ((\tilde{U}_1 R_1, \tilde{V}_1 R_1^{-\top}), \ldots, (\tilde{U}_L R_L, \tilde{V}_L R_L^{-\top})).
\end{equation}
Expressivity is an important property of LoL models. Informally, an LoL model is universally expressive if it can fit any nice $\GL$-invariant function to arbitrary accuracy. We give a formal definition in Definition~\ref{def:universality}, and prove our results in the context of this formal definition.

\subsection{MLP-Based Methods with Featurization}

\paragraph{MLP: Simple MLP on LoRA Weights}
The simplest of our models is a simple MLP that takes the flattened LoRA weights as input:  $\mathrm{MLP}_\theta(U_1, V_1, \ldots, U_L, V_L)$.
This method is not invariant to the LoRA parameter symmetries, but it is fully expressive, %
and it is efficient in that it does not need to form the $n$-by-$m$ dense matrices $UV^\top$.

\paragraph{MLP + $\O$-Align: Alignment for Canonicalization}
One method for designing invariant or equivariant networks is to canonicalize the input by using a symmetry-invariant transformation to convert it into a canonical form~\citep{kaba2023equivariance, dym2024equivariant, ma2024canonization}. For example, to canonicalize a dataset of point clouds with respect to rotations, one might choose one specific point cloud and then rotate all other point clouds in the dataset to it to maximize their relative similarity. After this alignment, any standard architecture can process the point clouds in an invariant manner without taking rotation into account. This kind of rotation alignment, known as $\O(r)$ canonicalization, admits a closed from solution and is significantly simpler than $\GL(r)$ canonicalization.

We canonicalize LoRA weights $U_i, V_i$ into $\O(r)$-invariant representatives $U_i Q_i, V_i Q_i$ as follows. First, we select template weights $\mathbf U_i, \mathbf V_i$ of the same shape as $U_i$ and $V_i$ (in our experiments we randomly select $\mathbf U_i$ and $\mathbf V_i$ from our training set of LoRAs). Then we align $U_i, V_i$ to the templates by finding the orthogonal matrix $Q_i$s that most closely match them in the Frobenius norm:
\begin{equation}
    Q_i = \argmin_{Q_i \in \O(r)} \norm{U_i Q_i - \mathbf U_i}_F^2 + \norm{V_i Q_i - \mathbf V_i}_F^2.
\end{equation}
This is an instance of the well-known Orthogonal Procrustes problem~\citep{schonemann1966generalized}, which has a closed form solution: $Q_i$ can be computed from the singular value decomposition of the $r$-by-$r$ matrix $U_i^\top \mathbf U_i + V_i^\top \mathbf V_i$, which can be computed efficiently when the rank $r$ is low. After computing these $Q_is$ for pair of LoRA weights, we input the $U_i Q_i, V_iQ_i$ into an MLP to compute predictions.

\paragraph{MLP + SVD: Singular Values as Features}
Similarly to \citet{salama2024datasetsizerecoverylora}, we also consider an LoL architecture that feeds the singular values of the multiplied-out LoRA weights, $\sigma_1(U_i V_i^\top),  \ldots, \sigma_r(U_i V_i^\top)$ into an MLP to compute predictions (though \citet{salama2024datasetsizerecoverylora} mostly use nearest neighbor predictors instead of an MLP). Since $U_i V_i^\top$ is rank $r$, there are at most $r$ nonzero singular values. These singular values are $\GL$-invariant features, but they are not expressive. For instance, negating $U_i$ does not affect the singular values, but it can completely change the functionality and destroy the performance of the finetuned model.

\paragraph{MLP + Dense: Multiplying-Out LoRA Weights}
Lastly, a simple, natural method is to first perform matrix multiplications to compute the full dense matrices $U_i V_i^\top$, and then apply a machine learning model to these dense matrices. In this paper, we will flatten and concatenate these dense matrices, and then apply a simple MLP to them. This approach is fully $\GL$-invariant, and is universally expressive, but it is computationally expensive since the dense matrices are of size $nm$ as opposed to the size $(n+m)r$ of the low rank decomposition.

\subsection{\GLNet: Constructing GL-invariant Models using Equivariant Layers}

\begin{figure}
    \centering
    \includegraphics[width=0.8\linewidth]{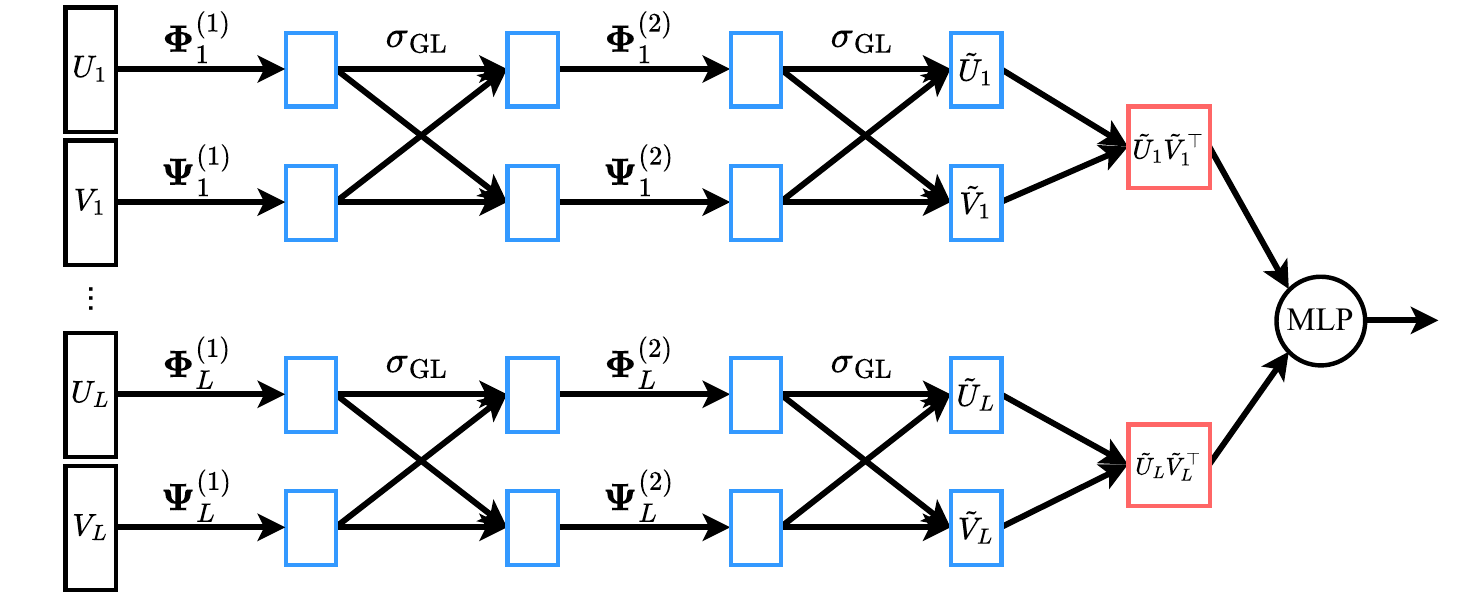}
    \caption{ Architecture of \GLNet. Blue boxes are equivariant representations, and red boxes are invariant representations. First, equivariant linear maps lower the dimension of the input. Then our $\GL$ equivariant nonlinearities and more equivariant linear maps process the features. Finally, a matrix multiplication head computes invariant features that are processed by an MLP.}
    \label{fig:gl_net_diagram}
\end{figure}

A common and effective method for parameterizing invariant neural networks is to first process the input with equivariant layers and then make the final prediction with invariant modules~\citep{cohen2016group, maron2018invariant, maron2019universality, bronstein2021geometric}. Thus, we develop \GLNet, which consists of a series of GL-equivariant linear layers and equivariant nonlinearities followed by a GL-invariant head to predict invariant characteristics of the input. See Figure~\ref{fig:gl_net_diagram} for an illustration.
 
\subsubsection{Equivariant Linear Layers}

\newcommand{\bPhi}{\mathbf{\Phi}}
\newcommand{\bPsi}{\mathbf{\Psi}}

For LoRA weights $U_i \in \RR^{n_i \times r}$ and $V_i \in \RR^{m_i \times r}$, we describe the $\GL$-equivariant linear layers, which map to new weights $\tilde U_i \in \RR^{n_i' \times r}$ and $\tilde V_i \in \RR^{m_i' \times r}$. For each of these LoRA weights, we have LoL model parameters $\bPhi_i \in \RR^{n_i' \times n_i}$ and $\bPsi_i \in \RR^{m_i' \times m_i}$. The equivariant linear map is given by:
\begin{equation}\label{eq:equiv_linear}
F_{\mathrm{Linear}}\left(U_1, V_1, \ldots, U_L, V_L \right) = 
         \left(\bPhi_1 U_1, \bPsi_1 V_1, \ldots, \bPhi_L U_L, \bPsi_L V_L  \right) 
\end{equation}

 That is, a $\GL$ equivariant linear layer consists of left matrix multiplying each $U_i$ by a learnable $\bPhi_i$, and left matrix multiplying each $V_i$ by a learnable $\bPsi_i$. In practice, we choose the same hidden dimension for every $\tilde U_i$ and $\tilde V_i$ for simplicity, so $n_1' = m_1' = \ldots = n_L' = m_L'$. The equivariance of $F_{\mathrm{Linear}}$ can be easily seen. For instance, if there is only one layer ($L=1$), we have
\begin{equation}
    F_{\mathrm{Linear}}(U_1 R, V_1R^{-\top}) = (\bPhi_1 U_1 R, \bPsi_1 V_1 R^{-\top}) = (\tilde U_1 R, \tilde V_1 R^{-\top}) = R \star F_{\mathrm{Linear}}(U_1, V_1),
 \end{equation}
 where $(\tilde U_1, \tilde V_1) = F_{\mathrm{Linear}}(U_1, V_1)$, and $R \star (\tilde U_1, \tilde V_1) = (\tilde U_1 R, \tilde V_1 R^{-\top})$ is the action of $R$ on the LoRA space. In fact, we can show a stronger statement --- these linear maps in \eqref{eq:equiv_linear} constitute \emph{all possible} $\GL$-equivariant linear maps. See Appendix~\ref{appendix:equiv_linear_proof} for the proof.
\begin{proposition}\label{prop:equiv_layers}
    \textbf{All} linear $\GL$-equivariant layers can be written in the form of \eqref{eq:equiv_linear}.
\end{proposition}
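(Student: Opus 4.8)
The plan is to parametrize an arbitrary linear equivariant map $F$ from the input LoRA space to the output LoRA space by its ``blocks'' and then use equivariance to annihilate every block except the ones appearing in \eqref{eq:equiv_linear}. By linearity, each output slot is a sum of contributions from every input factor, so I would write the $U$-output of layer $i$ as $\tilde U_i = \sum_j A_{ij}(U_j) + \sum_j B_{ij}(V_j)$ and the $V$-output as $\tilde V_i = \sum_j C_{ij}(U_j) + \sum_j D_{ij}(V_j)$, where $A_{ij}, B_{ij}, C_{ij}, D_{ij}$ are arbitrary linear maps between the relevant matrix spaces. The goal is then to prove $B_{ij} = C_{ij} = 0$ for all $i,j$, that $A_{ij} = D_{ij} = 0$ whenever $i \neq j$, and that the surviving maps $A_{ii}, D_{ii}$ are left multiplications by fixed matrices $\bPhi_i, \bPsi_i$.

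The first step is to restrict the symmetry to the scalar subgroup $\{\lambda I : \lambda \in \RR \setminus \{0\}\} \subset \GL(r)$, which already kills every unwanted block. Acting on layer $j$ alone by $R_j = \lambda I$ (and $R_l = I$ otherwise) sends $U_j \mapsto \lambda U_j$ and $V_j \mapsto \lambda^{-1} V_j$, while the output slot $\tilde U_i$ is scaled by $\lambda$ if $i=j$ and left fixed if $i \neq j$. Imposing equivariance and using homogeneity ($A(\lambda U) = \lambda A(U)$, etc.) yields, for each target slot, an identity of the shape $(\lambda^a - \lambda^b)(\text{block applied to its input}) = 0$ valid for all $\lambda \neq 0$. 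Whenever the exponents differ --- which happens for every cross-layer block and for the cross-type blocks $B$ and $C$ that route a $V$ into a $U$-output or vice versa --- the block must vanish. This leaves only the same-layer same-type maps $A_{ii}$ and $D_{ii}$, reducing everything to a single LoRA layer with $\tilde U = A(U)$ and $\tilde V = D(V)$.

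The second step, which I expect to be the main obstacle, is to show that the residual intertwining relation forces pure left multiplication. After the reduction, full $\GL(r)$-equivariance says $A(UR) = A(U)R$ for all invertible $R$, and, substituting $S = R^{-\top}$, the analogous $D(VS) = D(V)S$. This is morally an instance of Schur's lemma --- $A$ is an intertwiner of $\GL(r)$-representations between direct sums of copies of the $r$-dimensional irreducible, whose endomorphism algebra is just the scalars --- but I would argue it concretely to avoid left/right-module bookkeeping. Since both sides of $A(UR)=A(U)R$ are polynomial in the entries of $R$ and agree on the dense set $\GL(r)$, the identity extends to all $R \in \RR^{r \times r}$; substituting the matrix units $R = E_{kl}$ gives $A(u_k e_l^\top) = a_k(U)\, e_l^\top$, where $u_k$ and $a_k(U)$ denote the $k$-th columns of $U$ and $A(U)$. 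Taking $l = k$ shows the $k$-th output column depends only on the $k$-th input column through a single linear map, and comparing the resulting maps across different $k$ shows this map is independent of the column index; assembling columns then gives $A(U) = \bPhi U$ for a fixed $\bPhi$, and identically $D(V) = \bPsi V$, which is exactly \eqref{eq:equiv_linear}.

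The block decomposition and the column-wise identification should be routine; the points deserving care are the passage from invertible $R$ to arbitrary $R$ (handled by density together with polynomiality of both sides) and the degenerate case $r=1$, where $\GL(1)$ contains only scalars. In that case the scaling argument alone already forces $B = C = 0$, and the single-column relation immediately gives $A(U) = \bPhi U$, so the statement holds uniformly.
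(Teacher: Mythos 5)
Your proof is correct, but it takes a genuinely different route from the paper's. The paper argues representation-theoretically: it decomposes the input and output spaces into direct sums of copies of the standard representation $\RR^r$ and its dual $(\RR^r)^*$ for each of the $L$ copies of $\GL(r)$, invokes Schur's lemma to conclude that the space of intertwiners between two such irreducibles has dimension $1$ if they are isomorphic and $0$ otherwise, and then counts dimensions to show the known family in \eqref{eq:equiv_linear} already exhausts $\mathrm{Hom}_G(\mathcal{I},\mathcal{O})$. Your two steps are concrete surrogates for exactly the same two facts: the scalar-subgroup scaling argument (acting by $\lambda I$ on one layer at a time and matching exponents) is the hands-on version of ``non-isomorphic irreducibles admit no nonzero intertwiner,'' killing the cross-layer and $U\!\leftrightarrow\!V$ blocks, while the density-plus-matrix-units computation showing $A(UR)=A(U)R$ forces $A(U)=\bPhi U$ is the hands-on version of ``the endomorphism algebra of the standard representation is the scalars.'' Your version is more self-contained: the paper's appeal to Schur's lemma over $\RR$ technically only bounds the intertwiner space by a division algebra, and the fact that it is one-dimensional for the standard $\GL(r)$-representation still requires the very extension-by-density argument you spell out. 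The paper's version, in exchange, is shorter and generalizes immediately to any setting where the irreducible decomposition is known. You also correctly flag and dispose of the two delicate points (passing from invertible $R$ to all of $\RR^{r\times r}$, and the $r=1$ case), so I see no gap.
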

    
\paragraph{Extension to Convolution LoRAs.}
Low rank convolution decompositions are generally represented as two consecutive convolutions where $C_B$ projects the input down from $m$ to $r$ channels and $C_A$ projects the input up to $n$. For our architecture we flatten all dimensions of the convolutions except the hidden channel dimension, and then we apply equivariant linear layers. %

\subsubsection{Equivariant Non-linearity}

Equivariant networks often interleave pointwise non-linearities with linear equivariant layers. Unfortunately, non-trivial pointwise non-linearities are not equivariant to our symmetry group. A general recipe for designing equivariant non-linearities is taking $f_{\mathrm{equi}}(\mathbf{x}) = f_{\mathrm{inv}}(\mathbf{x}) \cdot \mathbf{x}$, for some scalar invariant function $f_{\mathrm{inv}}$~\citep{thomas2018tensor, villar2021scalars, BlumSmith2022MachineLA}. In our case, given any non-linearity $\sigma: \RR \to \RR$, we define a $\GL$-equivariant non-linearity by
\begin{equation}
        \sigma_{\GL}(U)_i = \sigma\Big(\sum\nolimits_j(UV^\top)_{ij}\Big)  U_i, \qquad \sigma_{\GL}(V)_i = \sigma\Big(\sum\nolimits_j(UV^\top)_{ji}\Big)  V_i.
\end{equation}
In other words, we scale the $i$-th row of $U_i$ by a quantity that depends on the the $i$-th row sum of $UV^\top$, and scale the $i$-th row of $V_i$ by a quantity that depends on the $i$-th column sum of $UV^\top$.
Since $U V^\top$ is invariant under the action of $\GL$, and $\GL$ acts independently on each row of $U$ and $V$, $\sigma_{\GL}$ is equivariant (we prove this in Appendix~\ref{appendix:invariance_proof}).
In our experiments, we often take $\sigma(x) = \mathrm{ReLU}(\mathrm{sign}(x))$, which has the effect of zero-ing out entire rows of $U$ or $V$ in a $\GL$-equivariant way --- this is a natural $\GL$-equivariant generalization of ReLU. 
For our experiments we tune the number of equivariant linear layers and we frequently find that only one is required, so we often do not use this equivariant non-linearity; nonetheless, it may be more useful in other applications, such as $\GL$-equivariant tasks.

\subsubsection{Invariant Head}
Many equivariant architectures used for classification use an invariant aggregation step before applying a standard classifier model. For invariant classification tasks, we use an invariant head which consists of the relatively simple operation of computing the LoRA matrix products, concatenating them, and then applying an MLP on top. Explicitly, this is given as $f_{\mathrm{inv}}(U_1, V_1, \ldots, U_L, V_L) = \mathrm{MLP}(\mathrm{cat}[U_1 V_1^\top, \ldots, U_L V_L^\top])$, where $\mathrm{cat}$ concatenates the entries of the inputs into a flattened vector.
For the input, computing $U_iV_i^T$ would be both memory and time expensive. To avoid this, we use equivariant linear layers to lower the dimension of $U_i, V_i$ to about $32 \times r$, such that $U_iV_i^\top \in \RR^{32 \times 32}$ is efficient to compute -- see Figure~\ref{fig:gl_net_diagram} for an illustration.

\section{Theory}

Here, we restate the properties of our models as described in Section~\ref{sec:lol_archs} and Table~\ref{tab:lol_models}. All proofs of these results are in the Appendix.

\begin{theorem}[Invariance]\label{thm:lol_invariance}
    MLP + Dense, MLP + SVD, \GLNet are $\GL$-Invariant. MLP + $\O$-Align is $\O$-Invariant but not $\GL$-Invariant. MLP is not $\GL$ or $\O$-Invariant.
\end{theorem}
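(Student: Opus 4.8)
The plan is to prove the six claims of Theorem~\ref{thm:lol_invariance} separately, since each concerns a different architecture, and to verify invariance/non-invariance directly from the definitions in \eqref{eq:equiv_linear} and the surrounding text. Throughout I will use the fact that the $\GL(r)^L$ action is $(U_i, V_i) \mapsto (U_i R_i, V_i R_i^{-\top})$ and the $\O(r)^L$ action is the restriction to orthogonal $Q_i$, noting that $\O(r) \subset \GL(r)$ so that $\GL$-invariance implies $\O$-invariance.

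First I would handle the two invariant featurizations. For \textbf{MLP + Dense}, the key observation is that the dense product is preserved: $(U_i R_i)(V_i R_i^{-\top})^\top = U_i R_i R_i^{-1} V_i^\top = U_i V_i^\top$. Since the MLP acts only on $\mathrm{cat}[U_1 V_1^\top, \ldots, U_L V_L^\top]$, whose argument is unchanged, the output is $\GL$-invariant. For \textbf{MLP + SVD}, the same identity shows that the multiplied-out matrices $U_i V_i^\top$ are invariant, hence so are their singular values $\sigma_1(U_i V_i^\top), \ldots, \sigma_r(U_i V_i^\top)$; feeding these into an MLP gives a $\GL$-invariant map. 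For \textbf{\GLNet}, I would argue compositionally: the equivariant linear layers satisfy $F_{\mathrm{Linear}}(R \star (U,V)) = R \star F_{\mathrm{Linear}}(U,V)$ (shown in the excerpt), the equivariant nonlinearity $\sigma_{\GL}$ is equivariant (cited from Appendix~\ref{appendix:invariance_proof}, and itself again relies on the invariance of $U V^\top$), and the invariant head $f_{\mathrm{inv}}$ applies the Dense argument to the final equivariant representation. A composition of equivariant maps followed by an invariant map is invariant, which gives the claim.

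Next I would treat \textbf{MLP + $\O$-Align}. For $\O$-invariance I would use that the Procrustes objective $\norm{U_i Q - \mathbf U_i}_F^2 + \norm{V_i Q - \mathbf V_i}_F^2$ is unchanged when the input is replaced by $(U_i Q_0, V_i Q_0)$ for $Q_0 \in \O(r)$: the minimizer transforms as $Q_i \mapsto Q_0^\top Q_i$ (by the substitution $Q \mapsto Q_0 Q$, which is a bijection of $\O(r)$), so the canonical form $U_i Q_i$ becomes $(U_i Q_0)(Q_0^\top Q_i) = U_i Q_i$ and likewise for $V_i$; hence the canonicalized input, and therefore the MLP output, is unchanged. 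The subtle point here is uniqueness of the Procrustes minimizer, which holds generically but can fail on a measure-zero set; I would state the invariance for inputs with a unique solution (or note that ties are broken consistently) to be careful. For the negative claim that MLP + $\O$-Align is \emph{not} $\GL$-invariant, it suffices to exhibit a single $R \in \GL(r) \setminus \O(r)$ changing the canonical form, e.g. a diagonal scaling $R$, which alters $U_i V_i^\top$ indirectly via the now-different optimal $Q_i$ and hence changes the MLP input.

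Finally, for \textbf{MLP} I would give an explicit counterexample: a plain MLP on flattened $(U_1, V_1, \ldots)$ generically returns different outputs on $(U,V)$ and $(UQ, VQ)$, so it is neither $\GL$- nor $\O$-invariant; one concrete scaling or rotation $R$ producing distinct flattened inputs that the (generic, non-constant) MLP separates establishes this. I do not expect any single step to be a deep obstacle; the main thing to be careful about is the \GLNet\ composition argument, where I must correctly chain equivariance of each layer with invariance of the head and lean on the separately-proved Proposition~\ref{prop:equiv_layers} and the appendix lemmas, and the $\O$-Align uniqueness caveat noted above.
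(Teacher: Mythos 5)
Your proposal follows essentially the same route as the paper's proof for every part: the $U_iR_iR_i^{-1}V_i^\top = U_iV_i^\top$ identity for MLP + Dense and MLP + SVD, the composition of equivariant layers with the invariant head for \GLNet, the Procrustes minimizer transforming as $Q_i \mapsto Q_0^\top Q_i$ (with the same uniqueness/measure-zero caveat) for $\O$-invariance of $\O$-Align, and an explicit counterexample for the plain MLP. The one slip is in your argument that MLP + $\O$-Align is not $\GL$-invariant: you say a diagonal scaling $R$ "alters $U_iV_i^\top$ indirectly via the now-different optimal $Q_i$," but $U_iV_i^\top$ is exactly preserved both by the $\GL$ action and by canonicalization (which right-multiplies $U_i$ and $V_i$ by the same orthogonal matrix), so nothing about the product changes. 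What does change is the canonical representative $(U_iQ_i, V_iQ_i)$ itself --- for $R = aI$ with $a > 1$ the canonicalized first block has Frobenius norm $a\norm{U_i}_F \neq \norm{U_i}_F$, since canonicalization is norm-preserving --- and you then need to choose an MLP that separates the two representatives (e.g.\ one approximating $\norm{\cdot}_F$ on the first block, which is exactly what the paper does). With that intermediate claim corrected, your argument coincides with the paper's.
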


\begin{theorem}[Universality]\label{thm:mlp_mul_universality}
    MLP, MLP + $\O$-Align, MLP + Dense, and \GLNet can arbitrarily approximate any $GL$-invariant continuous function on a compact set of full rank matrices.
\end{theorem}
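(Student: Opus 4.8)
The plan is to reduce every listed architecture to the classical universal approximation theorem (UAT) for MLPs, by arguing that each architecture's featurization retains all information needed to represent a $\GL$-invariant target. Fix a compact set $K$ of full-rank LoRA tuples and a continuous $\GL$-invariant target $g$. For the plain MLP, universality is immediate: UAT gives uniform approximation of any continuous function on $K$, and $g$ is continuous (invariance is not even needed here). For MLP + $\O$-Align, the point is that $\O(r) \subset \GL(r)$, so $g$ is in particular $\O$-invariant. Writing $c(U_1,V_1,\dots) = (U_1 Q_1, V_1 Q_1, \dots)$ for the canonicalization, $c$ maps each input into its own $\O$-orbit, hence $g(c(U,V)) = g(U,V)$; moreover, since orthogonal maps preserve singular values, $c(K)$ lies in a compact set of full-rank matrices. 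Applying UAT to approximate $g$ on $c(K)$ and composing yields $|\mathrm{MLP}(c(U,V)) - g(U,V)| = |\mathrm{MLP}(c(U,V)) - g(c(U,V))| < \epsilon$.

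The technical heart is MLP + Dense, which rests on showing that $\pi:(U_1,V_1,\dots,U_L,V_L) \mapsto (U_1 V_1^\top,\dots,U_L V_L^\top)$ is a \emph{complete} invariant for the $\GL(r)^L$ action on full-rank tuples. I would prove completeness factorwise: if $U_1 V_1^\top = U_2 V_2^\top =: M$ with $\mathrm{rank}(M) = r$, then $U_1$ and $U_2$ both have full column rank and column space $\mathrm{col}(M)$, so $U_2 = U_1 A$ for some $A \in \GL(r)$; likewise $V_2 = V_1 B$ for some $B \in \GL(r)$. Substituting gives $U_1 V_1^\top = U_1 A B^\top V_1^\top$, and cancelling the one-sided inverses of the full-rank factors $U_1$ and $V_1^\top$ forces $A B^\top = \mathbf I$, i.e. $(U_2, V_2) = (U_1 A, V_1 A^{-\top})$. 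Thus equal products are equivalent to lying in the same $\GL$-orbit, and $g$ descends to a well-defined $\tilde g$ on $\pi(K)$ with $g = \tilde g \circ \pi$.

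To invoke UAT on $\tilde g$ I must check it is continuous. Here I would use that $\pi|_K$ is a continuous surjection from the compact $K$ onto the Hausdorff image $\pi(K)$, hence a closed map and therefore a quotient map; since $g$ is constant on the fibers of $\pi$ (by the completeness lemma together with $\GL$-invariance), it descends to a continuous $\tilde g$ on the compact set $\pi(K)$. UAT then approximates $\tilde g$ within $\epsilon$, and $\mathrm{MLP}(\pi(U,V))$ approximates $g$ uniformly on $K$. Finally, \GLNet universality reduces to this case: since universality asserts only the \emph{existence} of parameters achieving the approximation (not efficiency), I would instantiate the equivariant linear layers as identities ($\bPhi_i = \mathbf I_{n_i}$, $\bPsi_i = \mathbf I_{m_i}$) and omit the equivariant nonlinearity, so that the invariant head computes exactly $\mathrm{MLP}(\mathrm{cat}[U_1 V_1^\top,\dots,U_L V_L^\top])$ --- this is precisely MLP + Dense, and the same argument applies.

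The main obstacle is the complete-invariant lemma and its continuous refinement. Establishing that two full-rank factorizations of the same product differ by a single $\GL(r)$ element is where the full-rank hypothesis is indispensable --- without it the product no longer determines the factorization up to the $\GL$ action, and the induced $\tilde g$ would be ill-defined. The more delicate part is upgrading the purely set-theoretic factorization $g = \tilde g \circ \pi$ to a \emph{continuous} $\tilde g$, which is what lets UAT be applied on $\pi(K)$; the compactness-to-quotient-map argument handles this, but it is the step most easily overlooked.
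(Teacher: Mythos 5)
Your proposal is correct and follows essentially the same route as the paper: plain UAT for MLP, canonicalize-then-approximate for $\O$-Align, the orbit-separation property of the product map $(U_i,V_i)\mapsto U_iV_i^\top$ on full-rank tuples for MLP + Dense, and reduction of \GLNet{} to MLP + Dense via identity equivariant layers. The only differences are internal: you prove orbit separation via a column-space argument rather than the paper's explicit Gram-matrix manipulation, and you re-derive the continuous factorization $g=\tilde g\circ\pi$ through a closed-map/quotient argument where the paper simply cites Proposition 1.3 of \citet{dym2024low} --- both are valid and equivalent in content.
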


\section{Experimental Results}

\begin{table}[ht]
    \centering
    \caption{We report train and test mean squared error (MSE) for predicting normalized CLIP score of CelebA-LoRA models. We also show test Kendall's $\tau$ and $R^2$ coefficients. \GLNet has significantly lower test MSE than any other LoL model, whereas a standard MLP does no better than random guessing. MLPs with $\O$-Align, SVD, or Dense featurization all perform similarly on the test set.
    }
    \begin{tabular}{l c c c c} 
    \toprule
     LoL Model & Train MSE & Test MSE & $\tau$ & $R^2$ \\ [0.5ex] 
     \midrule
     MLP & $.047 \pm .004$ & $.988 \pm .005$ &  $.226 \pm .016$ & $.333 \pm .022$\\ 
     MLP + $\O$-Align & $.001 \pm .001$ & $.175 \pm .006$ & $.654 \pm .004$ & $.856 \pm .004$\\ 
      MLP + SVD & $.071 \pm .003$ & $.148 \pm .005$ & $.667 \pm .008$ & $.863 \pm .006$\\
     MLP + Dense & $.013 \pm .002$ & $.169 \pm .011$ & $.677 \pm .006$ & $.871 \pm .005$\\
    \GLNet & $.009 \pm.001$ & $\mathbf{.111} \pm .004$ & $\mathbf{.695} \pm .005$ & $\mathbf{.884} \pm .003$  \\
    \bottomrule
    \end{tabular}
    \label{tab:clip}
\end{table}
In this section, we present experimental results evaluating our five LoL models across tasks involving finetuned diffusion and language models (Subsections ~\ref{sec:diff_mod_exp}-~\ref{sec:lang_mod_exp}). Our experiments demonstrate the efficacy of LoL models in solving GL-invariant tasks on LoRAs. Additionally, we explore these models' ability to generalize to LoRAs of previously unseen ranks, with detailed findings presented in Subsection~\ref{sec:rank_generalization}.

\subsection{Datasets of Trained LoRA Weights}
We generate three new datasets of LoRAs of varying ranks. Two of these datasets correspond to finetunes of diffusion models on image datasets, while the last corresponds to finetunes of a language model on text. Our datasets have diversity in terms of LoRA rank, training hyperparameters (two datasets have randomly sampled hyperparameters, whereas one has fixed hyperparameters), and base model architecture.

\paragraph{CelebA-LoRA.} We train a dataset of 3,900 LoRA finetuned Stable Diffusion 1.4 models \citep{rombach2022highresolutionimagesynthesislatent} using the PEFT library \citep{peft}. Each LoRA is rank 4, and is finetuned via DreamBooth personalization~\citep{ruiz2023dreambooth} on 21 images of a given celebrity in the CelebA dataset~\citep{liu2015faceattributes}. Further, each LoRA is trained with randomly sampled hyperparameters (gradient accumulation steps, train steps, learning rate, prompt) and initialization. We train LoL models to predict hyperparameters, CLIP scores, and training images of finetuned diffusion models by using their LoRA weights.

\paragraph{Imagenette-LoRA.} We also use Dreambooth to finetune another 2,046 Stable Diffusion 1.4 models with LoRA rank 32 on different subsets of the Imagenette dataset~\citep{howard2019imagenette} --- a subset of ImageNet~\citep{deng2009imagenet} consisting of images from ten dissimilar classes. Unlike CelebA LoRA, we use the same training hyperparameters for each finetuning run (but vary initialization, random seed, and finetuning dataset).

\paragraph{Qwen2-ARC-LoRA.} We create a dataset of 2,000 language model LoRAs by finetuning the 1.5 billion parameter Qwen2 model~\citep{yang2024qwen2} on subsets of the training set of the commonly used ARC dataset~\citep{clark2018think}, which is a dataset for testing question answering and science knowledge. The ARC dataset consists of questions from many data sources. For each LoRA, we randomly sample a subset of 19 data sources, and omit data from the unsampled data sources. Also, each LoRA is randomly initialized and trained with randomly sampled hyperparameters. See Appendix~\ref{appendix:llm_lora} for more details.

\subsection{Diffusion Model Classification} \label{sec:diff_mod_exp}

\subsubsection{CLIP Score prediction.}
CLIP scores were introduced by \cite{hessel2022clipscorereferencefreeevaluationmetric} as an automated method for measuring text-image-alignment in diffusion models by evaluating the semantic similarity between their prompts and generated images. Higher CLIP scores generally correspond to better diffusion models, so CLIP score is a useful metric for evaluating these models. We calculate the CLIP score of each of our 3,900 diffusion models (with 33 fixed prompts) and train LoL models on the task of determining the CLIP score of a model given its finetuned weights; see Appendix~\ref{appendix:celebA} for more details.

The results are shown in Table \ref{tab:clip}. All of the LoL models (besides the vanilla MLP) can effectively predict the CLIP score of diffusion LoRAs using only their decomposed low rank matrices. The poor performance of vanilla MLP demonstrates the importance of $\GL$ symmetries, whereas the relatively good performance of MLP + Dense suggests ``outer" permutation symmetries are less important for LoL models, confirming two of our hypotheses from Section~\ref{sec:background}. \GLNet is able to calculate CLIP score significantly faster than generating images, which requires 660 score-network forward passes per finetuned model. Other baselines, including MLP + SVD, MLP + $\O$-Align and MLP + Dense are less predictive. To demonstrate the utility of this task, we generate images from the two models in our test set predicted by \GLNet to have the highest and lowest CLIP scores respectively. As shown in Figure \ref{fig:diffusionoutputs}, \GLNet's predictions are highly correlated with the actual quality of model output. %
\begin{figure}
\centering
\caption{Images generated by two diffusion models in our test set. (a), (c), and (e) correspond to images generated by the model predicted by \GLNet to have the highest CLIP score. (b), (d), and (f) correspond to outputs of the model predicted by \GLNet to have the lowest CLIP score.}
\begin{subfigure}{.32\textwidth}
    \begin{subfigure}{.48\textwidth}
        \caption{Best Model}
        \centering
        \includegraphics[width=.95\linewidth]{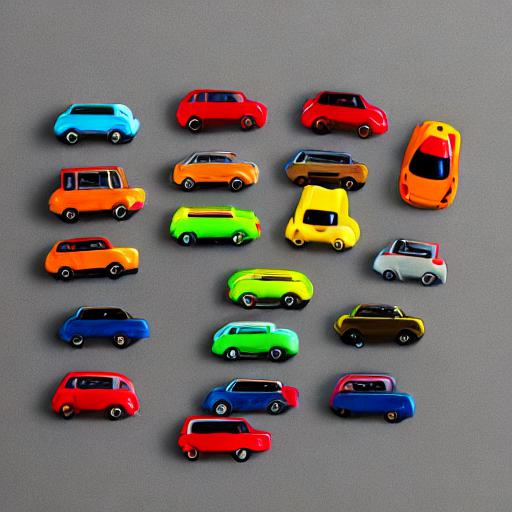}  
        \label{SUBFIGURE LABEL 1}
    \end{subfigure}
    \begin{subfigure}{.48\textwidth}
        \caption{Worst Model}
        \centering
        \includegraphics[width=.95\linewidth]{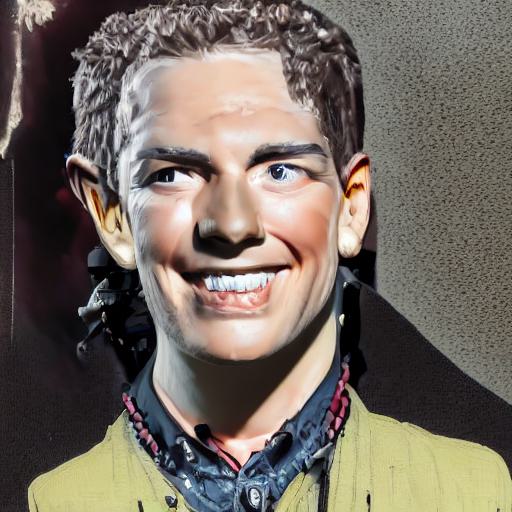}  
        \label{SUBFIGURE LABEL 2}
    \end{subfigure}
    \caption*{``toy cars"}
\end{subfigure}
\hspace{.1cm}
\begin{subfigure}{.32\textwidth}
    \begin{subfigure}{.48\textwidth}
        \caption{Best Model}
        \centering
        \includegraphics[width=.95\linewidth]{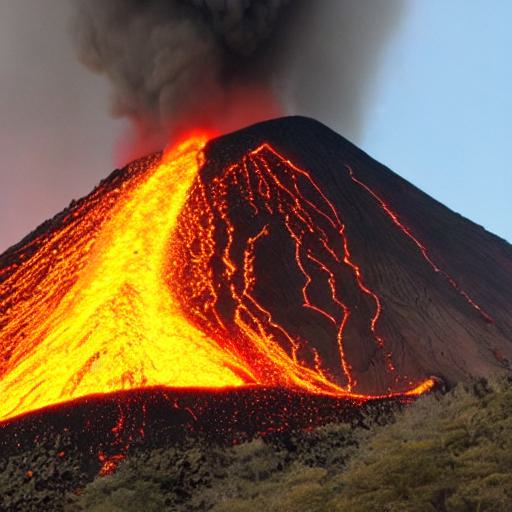}  
        \label{SUBFIGURE LABEL 3}
    \end{subfigure}
    \begin{subfigure}{.48\textwidth}
        \caption{Worst Model}
        \centering
        \includegraphics[width=.95\linewidth]{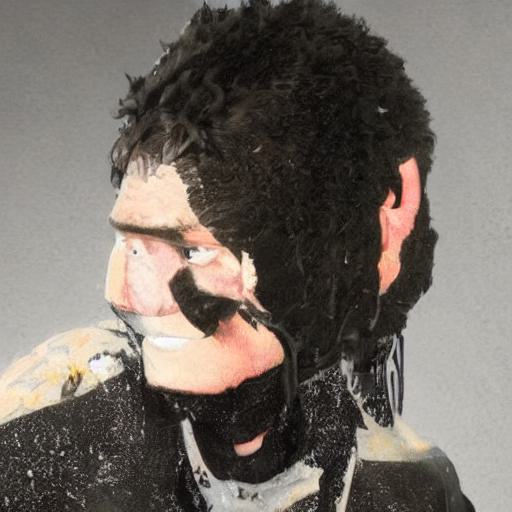}  
        \label{SUBFIGURE LABEL 4}
    \end{subfigure}
    \caption*{``a volcano"}
\end{subfigure}
\hspace{.1cm}
\begin{subfigure}{.32\textwidth}
    \begin{subfigure}{.48\textwidth}
        \caption{Best Model}
        \centering
        \includegraphics[width=.95\linewidth]{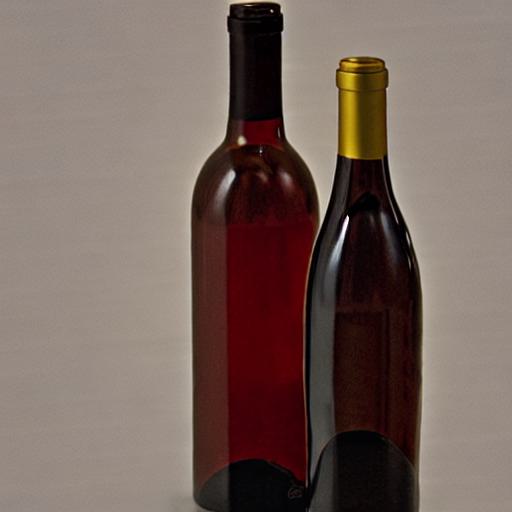}  
        \label{SUBFIGURE LABEL 5}
    \end{subfigure}
    \begin{subfigure}{.48\textwidth}
        \caption{Worst Model}
        \centering
        \includegraphics[width=.95\linewidth]{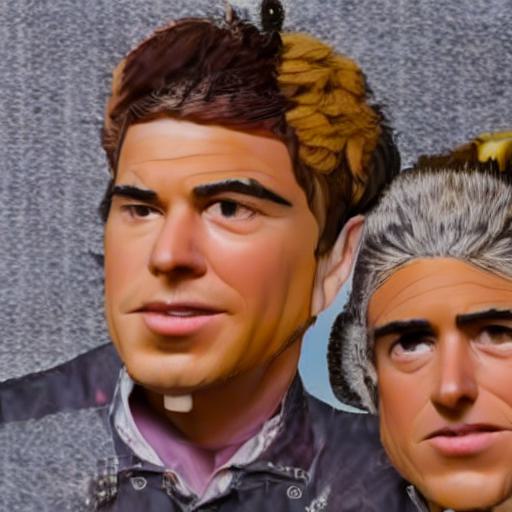}  
        \label{SUBFIGURE LABEL 6}
    \end{subfigure}
    \caption*{``two wine bottles"}
\end{subfigure}

\label{fig:diffusionoutputs}
\end{figure}

\begin{table}[ht]
    \centering
    \caption{Results for using LoL models to predict properties of the finetuning data of diffusion models, given only the LoRA weights. In the left section, the task is to predict 5 different binary attributes of the CelebA celebrity that each LoRA was finetuned on. In the right section, the task is to predict which of the 10 Imagenette classes were included in the finetuning data of the LoRA. MLP and MLP + SVD have trouble fitting these tasks, while \GLNet performs the best overall.}
    {\small
    \begin{tabular}{l c c c c c c} 
    \toprule
    & \multicolumn{3}{c}{CelebA Attributes} & \multicolumn{3}{c}{Imagenette Classes} \\
    \cmidrule(lr){2-4} \cmidrule(lr){5-7}
    LoL Model & Train Loss & Test Loss & Test Acc & Train Loss & Test Loss & Test Acc \\ 
    \midrule
    MLP & $.551 \pm .000$ & $.554 \pm .000$ & $72.4 \pm 0.0$ & $.582 \pm .004$ & $.709 \pm .004$ & $49.6 \pm 1.3$ \\ 
    MLP + $\O$-Align & $.074 \pm .022$ & $.333 \pm .008$ & $87.2 \pm 0.5$ & $.002 \pm .001$  & $.278 \pm .008$ & $87.8 \pm 0.3$ \\ 
    MLP + SVD & $.490 \pm .015$ & $.509 \pm .013$ & $77.3 \pm 1.3$ & $.581 \pm .013$ & $.638 \pm .013$ & $ 65.6\pm 0.6$ \\
    MLP + Dense & $.131 \pm .018 $ & $.267 \pm .007$ & $89.1 \pm 0.4$ & $.019 \pm .001 $ & $.264 \pm .011$ & $88.9 \pm 0.6$ \\ 
    \GLNet & $.064 \pm .008$ & $\textbf{.232} \pm \textbf{.007}$ & $\textbf{91.3} \pm \textbf{0.1}$ & $.019 \pm .000$ & $\textbf{.244} \pm \textbf{.005}$ & $\textbf{90.4} \pm \textbf{0.3}$ \\
    \bottomrule             
\end{tabular}
}
    \label{tab:data_prediction}
\end{table}

\subsubsection{Predicting Properties of Training Data}\label{sec:diffusion_data_prediction}

\paragraph{Dataset attribute prediction.} On our CelebA-LoRA dataset, we train LoL models to classify attributes of the celebrity that each LoRA was finetuned on. Due to the noisy nature of CelebA labels, we follow the advice of \citep{lingenfelter2022quantitative} and only train and test on the five CelebA attributes they determine to be least noisy. Also, we train LoL models to predict which Imagenette classes appeared in the finetuning data of each model in Imagenette-LoRA. Our results are in Table~\ref{tab:data_prediction}.

\paragraph{Dataset size prediction.} The task of predicting the size of the finetuning dataset given LoRA weights was first studied recently by~\citet{salama2024datasetsizerecoverylora}. The success of this task implies a privacy leak, since model developers may sometimes wish to keep the size of their finetuning dataset private. Moreover, the dataset size is a useful quantity to know for data membership inference attacks and model inversion attacks, so accurately predicting dataset size could improve effectiveness of these attacks too~\citep{shokri2017membership,haim2022reconstructing}.

\begin{wraptable}{r}{0.46\textwidth}
    \centering
    \caption{Results for finetuning dataset size prediction with LoL models. We use our Imagenette-LoRA dataset, and predict the number of classes (or equivalently images) that are used to finetune each model.} 
    {\small
    \begin{tabular}{l c r} 
         \toprule
         LoL Model & Train Loss & \multicolumn{1}{c}{Test Acc}\\ [0.5ex] 
         \midrule
         MLP & $.439 \pm .030$ & $20.9 \pm 3.1$\\
         MLP + $\O$-Align & $.014 \pm .004$ & $35.2 \pm 2.1$\\ 
        MLP + SVD & $.082 \pm .010$ & $\mathbf{73.1} \pm 1.6$\\
         MLP + Dense & $.382 \pm .119$ & $31.6 \pm 4.3$\\ 
        \GLNet & $.005 \pm .003$ & $46.8 \pm 2.1$\\
        \bottomrule             
    \end{tabular}
    }
    \label{tab:size_prediction}
    \vspace{-8pt}
\end{wraptable}

In Table~\ref{tab:size_prediction}, we show results for finetuning-dataset-size prediction using LoL models. The task is to take in the LoRA weights of one of the diffusion models from our Imagenette-LoRA dataset, and predict the number of unique images that it was finetuned on. Although it struggles on some other tasks, we see that MLP + SVD is able to effectively predict dataset size, in line with observations from \citet{salama2024datasetsizerecoverylora}. Other LoL models struggle to generalize well on this task. Interestingly, \GLNet performs approximately as well as expected if using the following strategy: first predict which classes are present in the finetuning set of the LoRA (as in Table~\ref{tab:data_prediction}), and then sum the number of classes present to predict the dataset size. MLP+SVD is clearly using different predictive strategies for this task, as it cannot predict which individual classes are present (Table~\ref{tab:data_prediction}), but it can predict the number of classes present. See Appendix~\ref{appendix:size_prediction_details} for more analysis on the learned prediction strategies of the LoL models on this task.

\subsection{Language Model Classification} \label{sec:lang_mod_exp}

\begin{table}[ht]
    \centering
    \caption{LoL model performance on LoRAs of the Qwen2 1.5B language model. The left column is prediction of which data sources the input LoRA was finetuned on, the middle column is prediction of the validation loss for the finetuning task, and the right column is prediction of the accuracy of the LoRA on the ARC-C~\citep{clark2018think} test set.
    All metrics are reported on the LoL task's test set (on held-out input networks). Higher numbers are better on all metrics.}
    \begin{tabular}{lcccc}
    \toprule
            & \multicolumn{3}{c}{LoL Model Prediction Target} \\
            \cmidrule{2-4}
    LoL Model & Data Membership (Acc) &  Val Loss ($R^2$) & ARC-C Acc ($R^2$) \\
    \midrule
         MLP & $.516\pm.006$ & $.113\pm.059$ & $.107\pm.035$ \\
         MLP + $\O$-Align & $.550 \pm .016$ & $.821 \pm .078$ & $.965 \pm .004$  \\
         MLP + SVD & $.551\pm.001$ & $\mathbf{.999}\pm.000$ & $.983\pm.002$ \\
         MLP + Dense & $\mathbf{.625}\pm.008$ & $.987\pm.003$ & $.981\pm.002$  \\
         \GLNet & $.605\pm.007$ & $.998\pm.000$ & $\mathbf{.987}\pm.001$ \\
         \bottomrule
    \end{tabular}
    \label{tab:llm_results}
\end{table}

In this section, we experiment with LoL models on our Qwen2-ARC-LoRA dataset of finetuned language models. For our first task, we consider a type of data membership inference task: we aim to predict whether each of the 19 data sources was used to train a given LoRA (this is a 19-label binary classification task). We also consider two performance prediction task: we train LoL models to predict, for each LoRA, the validation loss on the finetuning objective, and the downstream accuracy on the ARC-C test set~\citep{clark2018think}.

Results are in Table~\ref{tab:llm_results}. Several of our LoL models are very successful at predicting the finetuning validation loss and ARC-C test accuracy of LoRA-finetuned language models, with some of them achieving .99 $R^2$ on finetuning validation loss regression and over .98 $R^2$ on ARC-C test accuracy regression. This could be useful in evaluations of finetuned models, as standard evaluations of language models can require a lot of time and resources, whereas our LoL models can evaluate these models with one quick forward pass. However, data membership inference is a harder task for the LoL models, with the best model achieving only 62.5\% test accuracy in predicting which data sources were present in the finetuning dataset. Though our setup is quite different, these results could be related to work showing that model-based membership inference attacks are challenging for LLMs~\citep{duan2024membership, das2024blind}.

\subsection{Generalization to Unseen Ranks}\label{sec:rank_generalization}

\begin{figure}[ht]
    \centering
    \begin{minipage}[b]{0.49\linewidth}
        \centering
        \includegraphics[width=\linewidth]{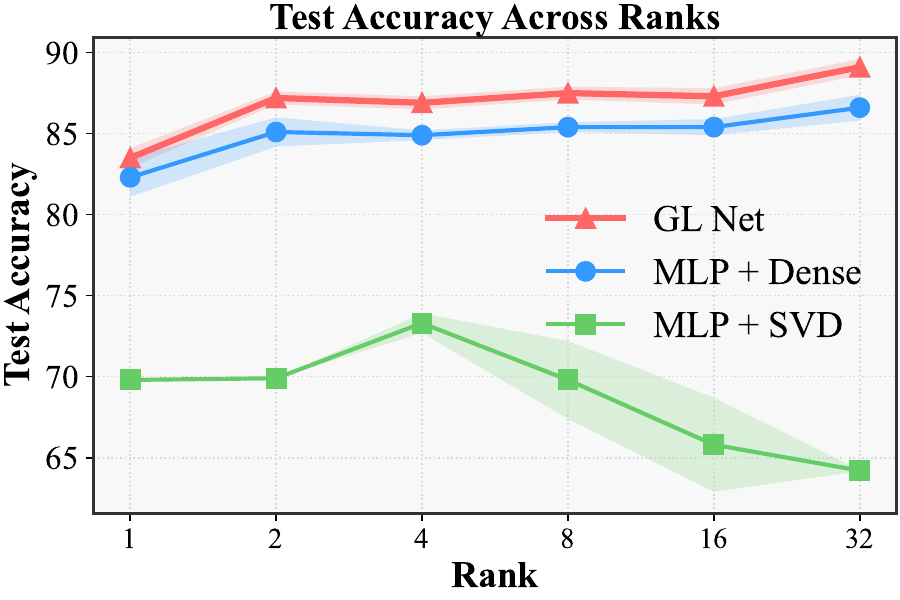}
    \end{minipage}
    \hfill
    \begin{minipage}[b]{0.49\linewidth}
        \centering
        {\small
        \begin{tabular}{rccc}
        \toprule
         Rank & MLP+SVD & MLP+Dense & \GLNet \\
         \midrule
        1 & $.714 \pm .03$ & $.424 \pm .02$ & $.418 \pm .01$ \\
        2 & $1.01 \pm .06$ & $.357 \pm .02$ & $.335 \pm .01$ \\
        \rowcolor{blue!10} 4 & $.571 \pm .02$ & $.369 \pm .02$ & $.323 \pm .01$ \\
        8 & $.604 \pm .04$ & $.349 \pm .02$ & $.324 \pm .01$ \\
        16 & $.700 \pm .07$ & $.343 \pm .02$ & $.317 \pm .01$ \\
        32 & $.789 \pm .01$ & $.328 \pm .01$ & $.284 \pm .01$ \\
        \bottomrule
        \end{tabular}
        }
        \vspace{18pt}
    \end{minipage}
    \caption{Performance of LoL models across inputs of varying ranks. Each model is only trained on rank $4$ LoRA weights from CelebA-LoRAs. (Left) Test accuracy on CelebA attribute prediction. (Right) Test loss on CelebA attribute prediction. MLP + Dense and \GLNet generalize well to ranks that are unseen during training, but do face degradation at rank one. On the other hand, MLP + SVD does not generalize well.}
    \label{fig:rank_generalization}
\end{figure}

In the previous sections, each LoL model was trained on LoRA weights of one fixed rank (e.g. rank 4 for the LLMs, rank 32 for the Imagenette models). In practice, model developers can choose different LoRA ranks for finetuning their models, even when they are finetuning the same base model for similar tasks. Thus, we may desire LoL models that are effective for inputs of different ranks. In this section, we explore an even more difficult problem --- whether LoL models can generalize to ranks that are unseen during training time.

The MLP + Dense and \GLNet models can directly take as input models of different ranks. We also use MLP + SVD on inputs of different ranks, by parameterizing the model for inputs of rank $r$, then truncating to top $r$ singular values for inputs of rank greater than $r$, and zero-padding to length $r$ for inputs of rank less than $r$.

In Figure~\ref{fig:rank_generalization}, we train LoL models on inputs of rank $4$, and then test performance on inputs of ranks between $1$ and $32$. On the CelebA attribute prediction task, we see that MLP + Dense and \GLNet mostly generalize very well to different ranks that are unseen during training (except not as well for $r=1$), while MLP + SVD does not generalize as well.

\subsection{Runtime and Scaling to Large Models}

\begin{figure}
    \centering
    \includegraphics[width=0.48\linewidth]{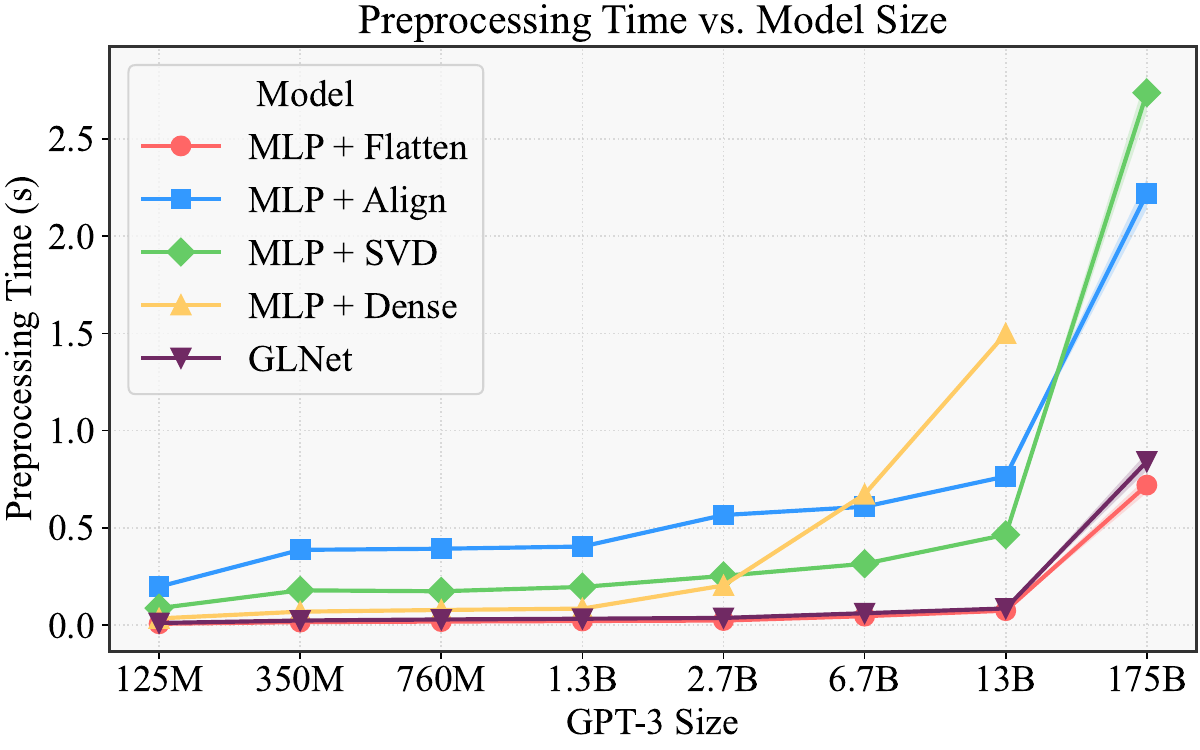}
    \hfill
    \includegraphics[width=0.48\linewidth]{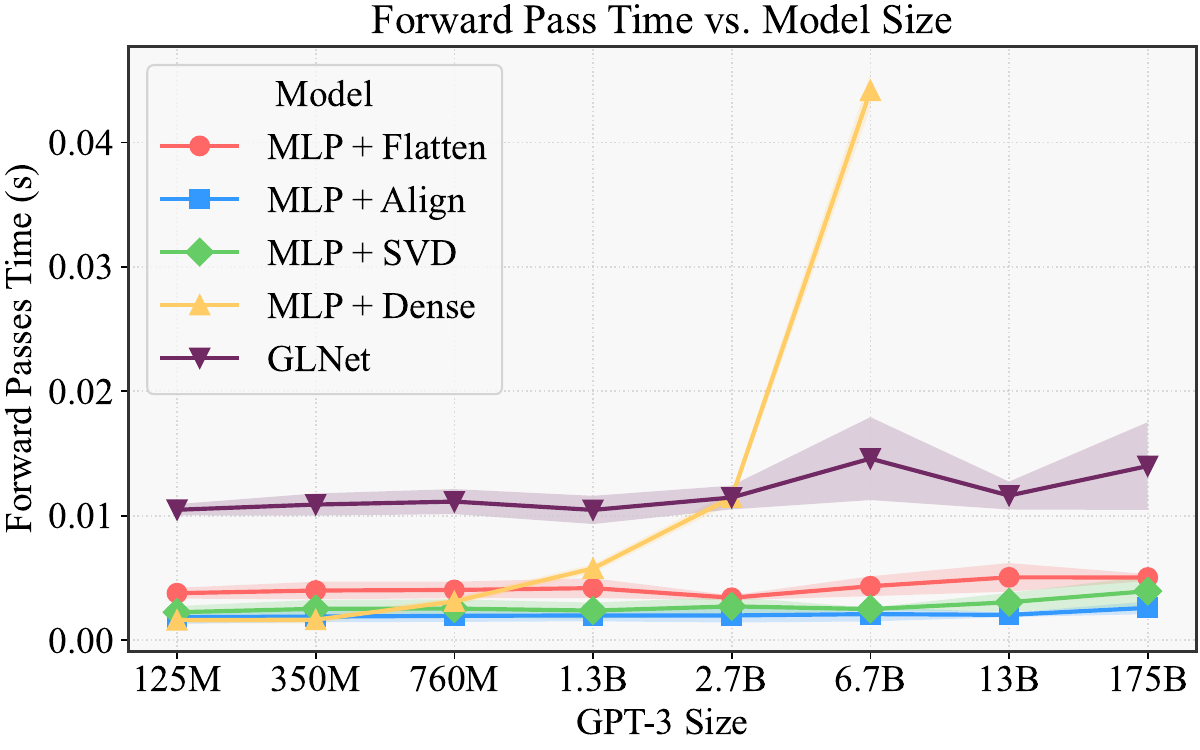}
    \caption{(Left) data preprocessing time for LoL models across 64 inputs of varying sizes. (Right) forward pass time for LoL models across 512 inputs of varying sizes. MLP + Dense runs out of memory for largest inputs.}
    \label{fig:time}
    \vspace{-10pt}
\end{figure}

Previous weight-space models generally take in small networks as inputs, e.g. 5,000 parameters~\citep{unterthiner2020predicting}, 80,000 parameters~\citep{lim2023graphmetanetworksprocessingdiverse}, or sometimes up to 4 million parameters~\citep{navon2023equivariant}. However, many of the most impactful neural networks have orders of magnitude more parameters. Thus, here we consider the runtime and scalability of LoL models as we increase the model size. We will consider all of the language model sizes in the GPT-3 family~\citep{brown2020language}, which range from 125M to 175B parameters. We assume that we finetune rank 4 LoRA weights for one attention parameter matrix for each layer. 

In Figure~\ref{fig:time}, we show the time it takes for data preprocessing of 64 input networks, and forward passes for 512 input networks (with batch size up to 64). Even at the largest scales, it only takes at most seconds to preprocess and compute LoL model forward passes for each input LoRA. Every LoL model besides MLP + Dense scales well with the model size: forward passes barely take longer at larger model sizes, and data preprocessing is still limited to less than a second per input network.

\section{Conclusion}

In this work, we introduced the Learning on LoRAs framework, and investigated architectures, theory, and applications for it. There are many potential applications and future directions for using LoL models to process finetunes. For instance, future work could explore equivariant tasks, such as those that involve editing or merging LoRAs. Additionally, future work could consider learning an LoL model that can generalize across different model architectures, or different base models of the same architecture.

\subsection*{Acknowledgements}

DL is supported by an NSF Graduate Fellowship. YG is supported by the the Engineering and Physical Sciences Research Council (EPSRC) Centre for Doctoral Training in Autonomous and Intelligent Machines and Systems (grant reference EP/S024050/1). SJ is supported by NSF Award CCF-2112665 (TILOS AI Institute), NSF Award 2134108 and an MIT Systems that Learn award. HM is the Robert J. Shillman Fellow, and is supported by the Israel Science Foundation through a personal grant (ISF 264/23) and an equipment grant (ISF 532/23).

\bibliography{refs_iclr2025}
\bibliographystyle{iclr2025_conference}

\appendix

\section{$\GL$ Equivariant Linear Maps Characterization}\label{appendix:equiv_linear_proof}

Here, we characterize the form of $\GL$-equivariant linear maps, and provide the proof of Proposition~\ref{prop:equiv_layers}. We use basic representation theory techniques, which are similar to the ones used to characterize equivariant linear maps in the geometric deep learning literature~\citep{maron2018invariant, finzi2021practical, navon2023equivariantarchitectureslearningdeep}.

\subsection{Proof of Proposition \ref{prop:equiv_layers}}

First, we prove a lemma, that allows us to analyze equivariant linear maps between direct sums of spaces in terms of a direct sum of equivariant linear maps between the constituent spaces. For a group $G$, we denote the vector space of $G$-equivariant linear maps from $\mathcal{V}$ to $\mathcal{W}$ by $\mathrm{Hom}_{G}(\mathcal{V}, \mathcal{W})$. This is closely related to a result from \citet{navon2023equivariant} that characterizes the matrices underlying linear maps between direct sums.
\begin{lemma}\label{lemma:sum_of_hom}
    Let $G$ be a group and let $\mathcal{V}_1, \dots, \mathcal{V}_n$, $\mathcal{W}_1, \dots, \mathcal{W}_m$ be $G$-representations. If $\mathcal{V} = \mathcal{V}_1 \oplus \cdots \oplus \mathcal{V}_n$ and $\mathcal{W} = \mathcal{W}_1 \oplus \cdots \oplus \mathcal{W}_m$ are direct sums of representations then
    \begin{equation}
        \mathrm{Hom}_G(\mathcal{V}, \mathcal{W}) \cong \bigoplus_{i, j}\mathrm{Hom}_G(\mathcal{V}_i, \mathcal{W}_j).
    \end{equation}
\end{lemma}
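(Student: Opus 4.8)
The plan is to prove this via the standard block-decomposition of linear maps, upgraded to the equivariant setting. For each summand I will use the canonical structure maps of a direct sum: let $\iota_i \colon \mathcal{V}_i \to \mathcal{V}$ and $p_i \colon \mathcal{V} \to \mathcal{V}_i$ be the inclusion and projection for $\mathcal{V}$, and similarly $j_k \colon \mathcal{W}_k \to \mathcal{W}$ and $q_k \colon \mathcal{W} \to \mathcal{W}_k$ for $\mathcal{W}$. These satisfy the usual relations $p_i \iota_i = \mathrm{id}_{\mathcal{V}_i}$, $p_{i'}\iota_i = 0$ for $i \neq i'$, and $\sum_i \iota_i p_i = \mathrm{id}_{\mathcal{V}}$ (and likewise for $\mathcal{W}$). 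Given any linear map $\phi \colon \mathcal{V} \to \mathcal{W}$, I define its blocks $\phi_{ji} := q_j\, \phi\, \iota_i \in \mathrm{Hom}(\mathcal{V}_i, \mathcal{W}_j)$, and conversely reassemble $\phi = \sum_{i,j} j_j\, \phi_{ji}\, p_i$. The assignment $\phi \mapsto (\phi_{ji})_{i,j}$ is manifestly linear, and the two formulas are mutual inverses, so this is already a linear isomorphism $\mathrm{Hom}(\mathcal{V}, \mathcal{W}) \cong \bigoplus_{i,j}\mathrm{Hom}(\mathcal{V}_i, \mathcal{W}_j)$ of the full (not-yet-equivariant) hom-spaces.

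The key point that makes this an isomorphism of $G$-equivariant hom-spaces is that, because $\mathcal{V} = \bigoplus_i \mathcal{V}_i$ and $\mathcal{W} = \bigoplus_j \mathcal{W}_j$ are direct sums \emph{of representations}, the $G$-action preserves each summand. Writing $\rho_{\mathcal{V}}(g)$, $\rho_{\mathcal{V}_i}(g)$, etc.\ for the action operators, this means precisely that the structure maps are themselves equivariant: $\rho_{\mathcal{V}}(g)\,\iota_i = \iota_i\,\rho_{\mathcal{V}_i}(g)$ and $p_i\,\rho_{\mathcal{V}}(g) = \rho_{\mathcal{V}_i}(g)\,p_i$, and correspondingly for $j_k, q_k$ on $\mathcal{W}$. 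I will next verify that $\phi$ is equivariant if and only if every block $\phi_{ji}$ is. For the forward direction, I compute
\begin{equation}
\phi_{ji}\,\rho_{\mathcal{V}_i}(g) = q_j\,\phi\,\iota_i\,\rho_{\mathcal{V}_i}(g) = q_j\,\phi\,\rho_{\mathcal{V}}(g)\,\iota_i = q_j\,\rho_{\mathcal{W}}(g)\,\phi\,\iota_i = \rho_{\mathcal{W}_j}(g)\,q_j\,\phi\,\iota_i = \rho_{\mathcal{W}_j}(g)\,\phi_{ji},
\end{equation}
using equivariance of $\iota_i$, of $\phi$, and of $q_j$ in turn. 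For the converse, if each $\phi_{ji}$ is equivariant, then $\phi = \sum_{i,j} j_j\,\phi_{ji}\,p_i$ is a finite sum of composites of equivariant maps, hence equivariant.

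Combining these two facts, the isomorphism $\phi \mapsto (\phi_{ji})_{i,j}$ restricts to a bijection between the equivariant subspace $\mathrm{Hom}_G(\mathcal{V}, \mathcal{W})$ on the left and $\bigoplus_{i,j}\mathrm{Hom}_G(\mathcal{V}_i, \mathcal{W}_j)$ on the right, yielding the claimed isomorphism. I expect the only genuinely load-bearing step to be the equivariance of the structure maps $\iota_i, p_i, j_k, q_k$: this is exactly where the hypothesis that the decompositions are decompositions of $G$-representations (and not merely of vector spaces) enters, and everything else is bookkeeping with the direct-sum relations. It is worth stating that point explicitly rather than treating it as automatic, since without it the block maps need not intertwine the actions.
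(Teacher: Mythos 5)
Your proof is correct and takes essentially the same route as the paper's: both construct the block-decomposition map $\phi \mapsto (q_j \circ \phi \circ \iota_i)_{i,j}$ and verify it is a linear bijection onto $\bigoplus_{i,j}\mathrm{Hom}_G(\mathcal{V}_i, \mathcal{W}_j)$. The only difference is organizational --- you first establish the isomorphism of full hom-spaces and then restrict to the equivariant subspaces (explicitly checking that the inclusions and projections intertwine the $G$-actions, a point the paper asserts without verification), whereas the paper directly proves injectivity and surjectivity of the equivariant map.
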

\begin{proof}
    We construct an explicit isomorphism $F: \mathrm{Hom}_G(\mathcal{V}, \mathcal{W}) \to \bigoplus_{i,j} \mathrm{Hom}_G(\mathcal{V}_i, \mathcal{W}_j)$. Let $\mathrm{inc}_i: \mathcal{V}_i \to \mathcal{V}$ denote the inclusion of $\mathcal{V}_i$ in $\mathcal{V}$, and $\mathrm{proj}_j: \mathcal{W} \to \mathcal{W}_j$ denote the projection from $\mathcal{W}$ to $\mathcal{W}_j$. For $\phi \in \mathrm{Hom}_G(\mathcal{V}, \mathcal{W})$, define:
    \begin{equation}
        F(\phi) = (\mathrm{proj}_j \circ \phi \circ \mathrm{inc}_i)_{i,j}
    \end{equation}
    
    $F$ is clearly linear, being defined by composition of linear maps. Moreover, since $\phi$, $\mathrm{inc}_i$, and $\mathrm{proj}_j$ are all $G$-equivariant, their composition $\phi_{i,j} = \mathrm{proj}_j \circ \phi \circ \mathrm{inc}_i$ is also $G$-equivariant. To prove injectivity, suppose $F(\phi) = F(\psi)$ for some $\phi, \psi \in \mathrm{Hom}_G(\mathcal{V}, \mathcal{W})$. Then $\forall v = (v_1, \dots, v_n) \in \mathcal{V}$
    \begin{align*}
        \phi(v) &= (\mathrm{proj}_1(\phi(v)), \dots, \mathrm{proj}_m(\phi(v))) \\
                &= \left(\sum_i \mathrm{proj}_1(\phi(\mathrm{inc}_i(v_i))), \dots, \sum_i \mathrm{proj}_m(\phi(\mathrm{inc}_i(v_i)))\right) \\
                &= \left(\sum_i \mathrm{proj}_1(\psi(\mathrm{inc}_i(v_i))), \dots, \sum_i \mathrm{proj}_m(\psi(\mathrm{inc}_i(v_i)))\right) \\
                &= \psi(v).
    \end{align*}
    For surjectivity, given $(\phi_{i,j})_{i,j} \in \bigoplus_{i,j} \mathrm{Hom}_G(\mathcal{V}_i, \mathcal{W}_j)$, define $\phi : \mathcal{V} \to \mathcal{W}$ by
    \begin{equation}
        \phi(v_1, \dots, v_n) = \left(\sum_i \phi_{i,1}(v_i), \dots, \sum_i \phi_{i,m}(v_i)\right).
    \end{equation}
    $\phi$ is linear by construction; to show $G$-equivariance, let $g \in G$ and $(v_1, \dots, v_n) \in \mathcal{V}$
    \begin{align*}
        \phi(g\cdot(v_1, \dots, v_n)) &= \phi(g \cdot v_1, \dots, g \cdot v_n) \\
                                      &= \left(\sum_i \phi_{i,1}(g\cdot v_i), \dots, \sum_i \phi_{i,m}(g\cdot v_i)\right) \\
                                      &= \left(\sum_i g \cdot \phi_{i,1}(v_i), \dots, \sum_i g \cdot \phi_{i,m}(v_i)\right) \\
                                      &= g\cdot\phi(v_1, \dots, v_n),
    \end{align*}
    where we use the $G$-equivariance of each $\phi_{i,j}$. Thus, $\phi \in \mathrm{Hom}_G(\mathcal{V}, \mathcal{W})$, and by construction $F(\phi) = (\phi_{i,j})_{i,j}$. Therefore, $F$ is a linear bijection. 
\end{proof}
\begin{proof}[Proof of Proposition \ref{prop:equiv_layers}]
    Let $\mathcal{I} := (\gU_1 \oplus \gV_1^*) \oplus \cdots \oplus (\gU_L \oplus \gV_L^*)$ and $\mathcal{O} := (\tilde{\gU}_1 \oplus \tilde{\gV}_1^*) \oplus \cdots \oplus (\tilde{\gU}_L \oplus \tilde{\gV}_L^*)$ be the input and output spaces of an equivariant LoL model. Denote $\dim(\mathcal{U}_i) = n_i \cdot r$, $\dim(\mathcal{V}_i^*) = r \cdot m_i$, $\dim(\tilde{\mathcal{U}}_i) = n_i' \cdot r$, $\dim(\tilde{\mathcal{V}}_i^*) = r \cdot m_i'$, and $G := \GL(r)^L = \GL(r) \times \cdots \times \GL(r)$. We are interested in characterizing the vector space of $G$-equivariant linear maps, which we denote by $\mathrm{Hom}_{G}(\mathcal{I}, \mathcal{O})$. In the main text we show that the maps  $F_{\mathrm{Linear}}^{\bPhi_1, \bPsi_1, \ldots, \bPhi_L, \bPsi_L}$, defined by
 \begin{equation}
F_{\mathrm{Linear}}^{\bPhi_1, \bPsi_1, \ldots, \bPhi_L, \bPsi_L}\left(U_1, V_1, \ldots, U_L, V_L \right) = 
         \left(\bPhi_1 U_1, \bPsi_1 V_1, \ldots, \bPhi_L U_L, \bPsi_L V_L  \right) 
\end{equation}
are all $G$-equivariant. In other words, we showed that
\begin{equation}
    \mathcal{L} := \left\{ F_{\mathrm{Linear}}^{\bPhi_1, \bPsi_1, \ldots, \bPhi_L, \bPsi_L} \mid \bPhi_i \in \mathbb{R}^{n_i' \times n_i}, \bPsi_i \in \mathbb{R}^{m_i' \times m_i} \right\} \subseteq \mathrm{Hom}_G(\mathcal{I}, \mathcal{O}).
\end{equation}
$\mathcal{L}$ is a linear subspace of dimension $\sum_{i = 1}^L n_i  n_i' + \sum_{i = 1}^L m_i m_i'$, so in order to prove that $\mathcal{L} = \mathrm{Hom}_G(\mathcal{I}, \mathcal{O})$, it's enough to show that $\dim(\mathrm{Hom}_G(\mathcal{I}, \mathcal{O})) = \sum_{i = 1}^L n_i n_i' + \sum_{i = 1}^L m_i m_i'$. Since $\mathcal{I}$ and $\mathcal{O}$ are direct sums of representations, Lemma \ref{lemma:sum_of_hom} implies that the dimension of $\mathrm{Hom}_G(\mathcal{I}, \mathcal{O})$ is the sum of the dimensions of the constituents:
\begin{equation}\label{eq:sum_of_reps}
    \begin{split}
        \dim(\mathrm{Hom}_G(\mathcal{I}, \mathcal{O})) = \sum_{i = 1}^L \sum_{i'=1}^L \biggl( \dim\left(\mathrm{Hom}_G(\mathcal{U}_i, \tilde{\mathcal{U}}_{i'})\right) + \dim\left(\mathrm{Hom}_G(\mathcal{U}_i, \tilde{\mathcal{V}}_{i'}^*)\right) + \\
        \dim\left(\mathrm{Hom}_G(\mathcal{V}_i^*, \tilde{\mathcal{U}}_{i'})\right) + \dim\left(\mathrm{Hom}_G(\mathcal{V}_i^*, \tilde{\mathcal{V}}_{i'}^*)\right) \biggr).
    \end{split}
\end{equation}
Next, note that $\mathcal{U}_i$, $\mathcal{V}_i^*$, $\tilde{\mathcal{U}}_i$, and $\tilde{\mathcal{V}}_i^*$ all decompose into irreducible representations 
\begin{equation}
    \mathcal{U}_i = \bigoplus_{j = 1}^{n_i}\mathcal{U}_i^j, \, \mathcal{V}_i^* = \bigoplus_{j = 1}^{m_i}(\mathcal{V}_i^j)^*, \, 
    \tilde{\mathcal{U}}_i = \bigoplus_{j = 1}^{n_i'}\tilde{\mathcal{U}}_i^j, \, \tilde{\mathcal{V}}_i^* = \bigoplus_{j = 1}^{m_i'}(\tilde{\mathcal{V}}_i^j)^*, 
\end{equation}

each of which is isomorphic to the standard representation of the $i$-th copy of $\GL(r)$ on either $\mathbb{R}^r$ (for $\mathcal{U}_i^j$ and $\tilde{\mathcal{U}}_i^j$) or $(\mathbb{R}^r)^*$ (for $(\mathcal{V}_i^j)^*$ and $(\tilde{\mathcal{V}}_i^j)^*$). We can think of $\mathcal{U}_i^j$ as the space spanned by the $j$-th row in the input $U_i$ matrix and $(\mathcal{V}_i^j)^*$ as the space spanned by the $j$-th column of the input $V_i^\top$ matrix. This decomposition gives us 
\begin{equation}\label{eq:sum_of_irreps}
    \begin{split}
        \dim\left(\mathrm{Hom}_G(\mathcal{U}_i, \tilde{\mathcal{U}}_{i'})\right) &= \sum_{j = 1}^{n_i} \sum_{j' = 1}^{n_i'}\dim\left(\mathrm{Hom}_G(\mathcal{U}_i^j, \tilde{\mathcal{U}}_{i'}^{j'})\right), \\
        \dim\left(\mathrm{Hom}_G(\mathcal{U}_i, \tilde{\mathcal{V}}_{i'}^*)\right) &= \sum_{j = 1}^{n_i} \sum_{j' = 1}^{m_i'}\dim\left(\mathrm{Hom}_G(\mathcal{U}_i^j, (\tilde{\mathcal{V}}_{i'}^{j'})^*)\right), \\
        \dim\left(\mathrm{Hom}_G(\mathcal{V}_i^*, \tilde{\mathcal{U}}_{i'})\right) &= \sum_{j = 1}^{m_i} \sum_{j' = 1}^{n_i'}\dim\left(\mathrm{Hom}_G((\mathcal{V}_i^j)^*, \tilde{\mathcal{U}}_{i'}^{j'})\right), \\
        \dim\left(\mathrm{Hom}_G(\mathcal{V}_i^*, \tilde{\mathcal{V}}_{i'}^*)\right) &= \sum_{j = 1}^{m_i} \sum_{j' = 1}^{m_i'}\dim\left(\mathrm{Hom}_G((\mathcal{V}_i^j)^*, (\tilde{\mathcal{V}}_{i'}^{j'})^*)\right).
    \end{split}
\end{equation}
Since these are all irreducible representations, Schur's lemma \citep{Fulton1991Representation} implies that the dimension of the space of $G$-equivariant maps is either $1$ (if the representations are isomorphic) or $0$ (if they are not). Notice that,
\begin{itemize}
    \item  $\mathcal{U}_i^j$ and $\tilde{\mathcal{U}}_{i'}^{j'}$ are isomorphic as $G$-representations if and only if $i = i'$ (if $i \neq i'$ different copies of $\GL(r)$ act on $\mathcal{U}_i^j$ and $\tilde{\mathcal{U}}_{i'}^{j'}$).
    \item  $(\mathcal{V}_i^j)^*$ and $(\tilde{\mathcal{V}}_{i'}^{j'})^*$ are isomorphic as $G$-representations if and only if $i = i'$ (if $i \neq i'$ different copies of $\GL(r)$ act on $(\mathcal{V}_i^j)^*$ and $(\tilde{\mathcal{V}}_{i'}^{j'})^*$). 
    \item  $\mathcal{U}_i^j$ is never isomorphic to $(\tilde{\mathcal{V}}_{i'}^{j'})^*$ and $(\mathcal{V}_i^j)^*$ is never isomorphic to $\tilde{\mathcal{U}}_{i'}^{j'}$.
\end{itemize}
Therefore, together with Equation \ref{eq:sum_of_reps} and Equation \ref{eq:sum_of_irreps} we get
\begin{equation}
    \begin{split}
        \dim\left(\mathrm{Hom}_G(\mathcal{I}, \mathcal{O})\right) & = \sum_{i = 1}^L \sum_{i' = 1}^L \left( n_i n_{i'}' \cdot \mathbf{1}_{i = i'} +  m_i m_{i'}' \cdot \mathbf{1}_{i = i'} \right) \\ 
        &= \sum_{i = 1}^L n_i  n_i' + \sum_{i = 1}^L m_i m_i'
    \end{split}, 
\end{equation}
concluding the proof. 
\end{proof}

\section{Proof of Invariances: Theorem~\ref{thm:lol_invariance}}\label{appendix:invariance_proof}

We prove invariance (or lack thereof) of each model one by one. For this section, consider arbitrary inputs $\mathbf{UV} = (U_1, V_1, \ldots, U_L, V_l)$, where $U_i \in \RR^{n_i \times r}$ and $V_i \times \RR^{m_i \times r}$. Further, choose invertible $\mathbf{R} = (R_1, \ldots, R_L) \in \GL(r)^L$. Denote the application of $\mathbf{R}$ on $\mathbf{UV}$ by $\mathbf{R} \star \mathbf{UV} = (U_1 R_1, V_1 R_1^{-\top}, \ldots, U_L R_L, V_L R_L^{-\top})$ To show that a function $f$ is $\GL$ invariant, we will show that $f(\mathbf{R} \star \mathbf{UV}) = f(\mathbf{UV})$. Note that, since $\O(r) \subset \GL(r)$, if we show that a function is $\GL$-invariant, then it is also $\O$-invariant.

\paragraph{MLP.} We will show that the simple MLP is neither $\O$-invariant or $\GL$-invariant. It suffices to show that it is not $\O$-invariant. As a simple example, let $\mathrm{MLP}(U, V) = U_{1,1}$ output the top-left entry of $U$. Then let $U$ be a matrix with $1$ in the top-left corner and $0$ elsewhere. Further, let $P$ be the permutation matrix that swaps the first and second entries of its input. Then $\mathrm{MLP}(UP, VP) = 0 \neq \mathrm{MLP}(U, V) = 1$. As permutation matrices are orthogonal, $P \in \O(r)$, so the MLP is indeed not $\O(r)$ invariant.

\paragraph{MLP + $\O$-Align.} We will show that the $\O$-alignment approach is $\O$-invariant on all but a Lebesgue-measure-zero set. For simplicity, let $L = 1$, $U \in \RR^{n \times r}$, and $V \in \RR^{m \times r}$. Let $\mathbf{U} \in \RR^{n \times r}$ and $\mathbf{V} \in \RR^{m \times r}$ be the template matrices, which we assume are full rank (the full rank matrices are a Lebesgue-dense set, so this is an allowed assumption). Recall that we canonicalize $U, V$ as $\rho(U, V) = (UQ, VQ)$, where:
\begin{equation}
    Q = \argmin_{Q \in O(r)} \norm{U Q - \mathbf U}_F^2 + \norm{V Q - \mathbf V}_F^2.
\end{equation}
We call any solution to this problem a \emph{canonicalizing matrix} for $(U, V)$. This can be equivalently written as
\begin{equation}
    Q = \argmin_{Q \in O(r)} \norm{\begin{bmatrix}U \\ V\end{bmatrix} Q - \begin{bmatrix}\mathbf U \\ \mathbf{V}\end{bmatrix}}_F^2.
\end{equation}
Let $M = U^\top \mathbf{U} + V^\top \mathbf{V}$. Then a global minimum of this problem is $Q = AB^\top$, where $A\Sigma B^\top$ is an SVD of $M$. If $M$ has distinct singular values, then $A$ and $B$ are unique up to sign flips, and $AB^\top$ is in fact unique.  Thus, if $M$ has unique singular values, $(UQ, VQ)$ is unique. We assume from here that $M$ has distinct singular values, as the set of all $M$ that do form a Lebesgue-dense subset of $\RR^{r \times r}$ (and thus this is satisfies for Lebesgue-almost-every $U$ and $V$).

Now, to show $\O$-invariance, let $\tilde Q \in \O(r)$. We will show that $\rho(U\tilde Q, V\tilde Q) = \rho(U, V)$.
\begin{equation}
    \argmin_{Q \in O(r)} \norm{\begin{bmatrix}U\tilde Q \\ V\tilde Q\end{bmatrix} Q - \begin{bmatrix}\mathbf U \\ \mathbf{V}\end{bmatrix}}_F^2 =  \argmin_{Q' \in O(r)} \norm{\begin{bmatrix}U\\ V \end{bmatrix} Q' - \begin{bmatrix}\mathbf U \\ \mathbf{V}\end{bmatrix}}_F^2
\end{equation}
because the orthogonal matrices are closed under multiplication. Thus, if $Q$ is a canonicalizing matrix for $(U, V)$, then $\tilde Q^\top Q$ is a canonicalizing matrix for $(U\tilde Q, V \tilde Q)$. Moreover, we have that $\tilde Q^\top U^\top \mathbf{U} + \tilde Q^\top V^\top \mathbf{V} = \tilde Q^\top M$, so this has the same singular values as $M$ (as orthogonal matrices don't affect singular values); this means that the singular values are distinct, so there is a unique canonicalizing matrix for $(U \tilde Q, V \tilde Q)$. This means that the canonicalization matrix must be equal to $\tilde Q^\top Q$, so that
\begin{equation}
    \rho(U \tilde Q, V \tilde Q) = (U \tilde Q \tilde Q^\top Q, V \tilde Q \tilde Q^\top Q) = (UQ, VQ) = \rho(U, V).
\end{equation}
As this argument holds for Lebesgue-almost-every $U$ and $V$, we have shown $\O$-invariance of MLP + $\O$-Align.

Finally, we have to show that this LoL model is not $\GL$-invariant. To do this, let the MLP approximate the Frobenius norm function on its first input, so $\mathrm{MLP}(U, V) \approx \norm{U}$. Consider any $U$ that is nonzero, and let $a > 2$ be a scalar. Then $aI \in \GL(r)$. Also, we have that $\rho(U, V) = (UQ, VQ)$ for some $Q \in \O(r)$, so this canonicalization does not affect the Frobenius norm of the first entry. Thus, we have that
\begin{equation}
    \mathrm{MLP}(\rho(U, V)) \approx \norm{U} \neq a \norm{U} \approx \mathrm{MLP}(\rho(aU, (1/a)V )).
\end{equation}
So MLP + $\O$-Align is not invariant under $\GL$.

\paragraph{MLP + SVD.} We show this is $\GL$-invariant. The output of this model $f$ on $\mathbf{UV}$ can be written as
\begin{equation}
    \mathrm{MLP}(\sigma_1(U_1 V_1^\top), \ldots, \sigma_r(U_1 V_1^\top), \ldots, \sigma_1(U_L V_L^\top), \ldots, \sigma_r(U_L V_L^\top).
\end{equation}
Where $\sigma_j(U_i V_i^\top)$ is the $j$th singular value of $U_i V_i^\top$. Thus, we can compute that:
\begin{align}
f(\mathbf{R} \star \mathbf{UV}) & = \mathrm{MLP}(\sigma_1(U_1R_1 R_1^{-1}V_1^\top), \ldots, \sigma_r(U_L R_L R_L^{-1} V_L^\top))\\
& = \mathrm{MLP}(\sigma_1(U_1V_1^\top), \ldots, \sigma_r(U_L V_L^\top))\\
& = f(\mathbf{UV}).
\end{align}

\paragraph{MLP + Dense.} We show this is $\GL$-invariant. We can simply see that
\begin{align}
    \mathrm{MLP}(\mathbf{R}\star \mathbf{UV}) & = \mathrm{MLP}(U_1 R_1 R_1^{-1} V_1^\top, \ldots, U_L R_L R_L^{-1} V_L^\top)\\
    & = \mathrm{MLP}(U_1 V_1^\top, \ldots, U_L V_L^\top)\\
    & = \mathrm{MLP}(\mathbf{UV}).
\end{align}

\paragraph{\GLNet.} We show this is $\GL$-invariant. First, assume that the equivariant linear layers are $\GL$-equivariant, and the equivariant nonlinearities are $\GL$-equivariant. Note that the invariant head of \GLNet is a special case of MLP + Dense, which we have already proven to be invariant. Further, an invariant function composed with an equivariant function is invariant, so we are done.

We only need to prove that the nonlinearities are $\GL$-equivariant, because we have already proven that the equivariant linear layers are $\GL$-equivariant in the main text. Let $\sigma: \RR \to \RR$ be any real function, and recall that the $\GL$-equivariant nonlinearity takes the following form on $U \in \RR^{n \times r}$, $V \in \RR^{m \times r}$:
\begin{equation}\label{eq:nonlinearity}
   \sigma_{\GL}(U, V) = (\tilde U, \tilde V), \qquad     \tilde U_i = \sigma\Big(\sum\nolimits_j(UV^\top)_{ij}\Big)  U_i, \qquad \tilde V_i = \sigma\Big(\sum\nolimits_j(UV^\top)_{ji}\Big)  V_i,
\end{equation}
where $\tilde U \in \RR^{n \times r}$, $\tilde V \in \RR^{m \times r}$, and for example $U_i \in \RR^r$ denotes the $i$th row of $U_i$.
\begin{lemma}
    For any real function $\sigma: \RR \to \RR$, the function $\sigma_{\GL}$ defined in \eqref{eq:nonlinearity} is $\GL$ equivariant. 
\end{lemma}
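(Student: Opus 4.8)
The plan is to verify the equivariance condition directly: for every $R \in \GL(r)$, I want to show that $\sigma_{\GL}(UR, VR^{-\top}) = (\tilde U R, \tilde V R^{-\top})$, where $(\tilde U, \tilde V) = \sigma_{\GL}(U, V)$. The entire argument hinges on a single observation that is already implicit in the invariance proofs above: the dense product $UV^\top$ is fixed by the group action, since $(UR)(VR^{-\top})^\top = U R R^{-1} V^\top = U V^\top$. Consequently, every scalar coefficient $\sigma(\sum_j (UV^\top)_{ij})$ and $\sigma(\sum_j (UV^\top)_{ji})$ appearing in the definition of $\sigma_{\GL}$ is left unchanged when we replace $(U, V)$ by $(UR, VR^{-\top})$.

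The second ingredient is that the $\GL(r)$ action is applied on the right, hence it acts independently on each row. Writing $U_i \in \RR^r$ for the $i$-th row of $U$, right multiplication gives $(UR)_i = U_i R$, and likewise $(VR^{-\top})_i = V_i R^{-\top}$. So I would compute the $i$-th row of the $U$-output of $\sigma_{\GL}(UR, VR^{-\top})$ as
\begin{equation}
\sigma\Big(\sum\nolimits_j (UV^\top)_{ij}\Big) (U_i R) = \Big(\sigma\Big(\sum\nolimits_j (UV^\top)_{ij}\Big) U_i\Big) R = \tilde U_i R,
\end{equation}
using that the scalar pulls through the row-vector--matrix product. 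Stacking over all $i$ yields $\tilde U R$. The identical computation on the $V$-side, now using the column sums $\sum_j (UV^\top)_{ji}$ and the action $V_i \mapsto V_i R^{-\top}$, yields $\tilde V R^{-\top}$. Together these give exactly $R \star (\tilde U, \tilde V)$, establishing equivariance.

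There is no serious obstacle here; the result is essentially a bookkeeping check once the invariance of $UV^\top$ is isolated. The only points requiring care are (i) keeping straight that the coefficients depend on $(U,V)$ only through $UV^\top$, so that they are genuinely invariant rather than merely equivariant, and (ii) correctly matching the two different transpose conventions on the two factors --- the rows of $U$ transform by $R$ while the rows of $V$ transform by $R^{-\top}$ --- so that the output lands in the correct orbit $R \star (\tilde U, \tilde V)$. I would also note explicitly that this argument makes no use of continuity or any special property of $\sigma$, so it holds for an arbitrary function $\sigma \colon \RR \to \RR$, exactly as claimed.
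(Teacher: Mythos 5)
Your proof is correct and follows essentially the same route as the paper's: both arguments reduce to the observation that $(UR)(VR^{-\top})^\top = UV^\top$, so the scalar coefficients are invariant, and then note that scalar multiplication of a row commutes with right multiplication by $R$ (resp.\ $R^{-\top}$). No meaningful difference in approach.
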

\begin{proof}
    Let $U \in \RR^{n \times r}, V \in \RR^{m \times r},$ and $R \in \GL(r)$ be arbitrary. Denote the output of the nonlinearity on the transformed weights as $\sigma_{\GL}(UR, VR^{-\top}) = (\tilde U^{(R)}, \tilde V^{(R)})$. Then we have that
    \begin{align}
        \tilde U_i^{(R)} & = \sigma\Big(\sum_j(UR R^{-1}V^\top)_{ij} \Big) R^\top U_i = R^\top\left[\sigma\Big(\sum_j(UV^\top)_{ij} \Big) U_i\right] = R^\top U_i\\ 
        \tilde V_i^{(R)} & = \sigma\Big(\sum_j(UR R^{-1}V^\top)_{ji} \Big) R^{-1} V_i = R^{-1}\left[\sigma\Big(\sum_j(UV^\top)_{ji} \Big) V_i\right] = R^{-1} V_i.
    \end{align}
    In matrix form, this means that $\tilde U_i^{(R)} = \tilde U_i R$ and $\tilde V_i^{(R)} = \tilde V_i R^{-\top}$. In other words,
    \begin{equation}
        \sigma_{\GL}(UR, VR^{-1}) =  R \star \sigma_{\GL}(U, V),
    \end{equation}
    which is the definition of $\GL$-equivariance, so we are done.
\end{proof}

\section{Proof of Expressivity: Theorem \ref{thm:mlp_mul_universality}}

In this section we formally restate and prove Theorem~\ref{thm:mlp_mul_universality}. We start by defining full rank $\GL$ universality for LoL models. 

\begin{definition}[Full rank $\GL$-universality]\label{def:universality}
    Let $\mathcal{D} = \{(U_1, V_1, \dots, U_L, V_L) \mid U_i \in \RR^{n_i \times r}, V_i \in \RR^{m_i}, \mathrm{rank}(U_i) = \mathrm{rank}(V_i) = r\}$ be the set of LoRA updates of full rank. A LoL architecture is called full rank $\GL$-universal if for every $\GL$-invariant function $f: \mathcal{D} \to \RR$, every $\epsilon > 0$, and every compact set $K \subset \mathcal{D}$, there is a model $f^{\mathrm{LoL}}$ of said architecture that approximates $f$ on $K$ up to $\epsilon$:
    \begin{equation}
        \sup_{\mX \in K} \vert f^{\mathrm{LoL}}(\mX) - f(\mX)\rvert < \epsilon.
    \end{equation}
\end{definition}
Note that the set $\mathcal{D}$ of full-rank LoRA updates is Lebesgue-dense (its compliment has measure $0$).
\begin{theorem}[Formal restatement of Theorem~\ref{thm:mlp_mul_universality}]\label{thm:mlp_mul_universality_formal}
    The MLP, MLP + $\O$-Align, MLP + Dense, and $\GL$-net LoL architectures are all full rank $\GL$ universal.
\end{theorem}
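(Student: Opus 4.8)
The plan is to handle the four architectures separately, in each case reducing the claim to the classical universal approximation theorem for MLPs on a compact set while exploiting the hypothesis that the target $f$ is $\GL$-invariant. The guiding observation is that full rank $\GL$-universality only requires that the architecture \emph{can} approximate the invariant $f$; it does \emph{not} require every function the architecture represents to be invariant. For the plain MLP this makes the statement immediate: the restriction of $f$ to the compact set $K \subset \mathcal{D}$ is a continuous function, so the standard universal approximation theorem supplies an MLP within $\epsilon$ in the sup norm on $K$, using no invariance of the architecture at all.

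For MLP + Dense I would factor $f$ through the multiply-out map $\mu(U_1, V_1, \ldots, U_L, V_L) = (U_1 V_1^\top, \ldots, U_L V_L^\top)$. The crucial step is an orbit lemma: for full-rank pairs, $\mu(\mathbf{UV}) = \mu(\mathbf{U'V'})$ if and only if $\mathbf{U'V'} = \mathbf{R} \star \mathbf{UV}$ for some $\mathbf{R} \in \GL(r)^L$. This follows by matching column spaces of the rank-$r$ products and using left-cancellation of full-column-rank factors. Consequently a $\GL$-invariant $f$ is constant on the fibers of $\mu$ and descends to a well-defined $g$ with $f = g \circ \mu$. Because $\mu \colon K \to \mu(K)$ is a continuous surjection between compact Hausdorff spaces, it is a quotient map, so $g$ is continuous on $\mu(K)$. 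Approximating $g$ by an MLP on $\mu(K)$ and precomposing with $\mu$ then yields the result.

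For MLP + $\O$-Align I would use the inclusion $\O(r) \subset \GL(r)$: a $\GL$-invariant $f$ is in particular $\O$-invariant, so $f(\rho(\mathbf{UV})) = f(\mathbf{UV})$ for every input (any orthogonal canonicalizer $Q$ gives the same value, so this holds even on the measure-zero set where $\rho$ is not single-valued). On the compact set $K$ the smallest singular values of each $U_i, V_i$ are bounded below by some $\delta > 0$ and the Frobenius norms are bounded above; since $\rho$ acts by right multiplication by orthogonal matrices it preserves both, so $\rho(K)$ lies inside a compact subset $K' \subset \mathcal{D}$. Choosing an MLP with $\sup_{K'}|\mathrm{MLP} - f| < \epsilon$ then gives $\lvert \mathrm{MLP}(\rho(\mathbf{UV})) - f(\mathbf{UV})\rvert = \lvert \mathrm{MLP}(\rho(\mathbf{UV})) - f(\rho(\mathbf{UV}))\rvert \le \sup_{K'}\lvert \mathrm{MLP} - f\rvert < \epsilon$ pointwise on $K$, with no need for $\rho$ to be continuous since the error is controlled entirely on the image side.

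Finally, for \GLNet I would observe that it contains MLP + Dense as a special case: taking each equivariant linear layer to be the identity ($\bPhi_i = I$, $\bPsi_i = I$, so $n_i' = n_i$ and $m_i' = m_i$) and using no equivariant nonlinearity, the invariant head computes exactly $\mathrm{MLP}(\mathrm{cat}[U_1 V_1^\top, \ldots, U_L V_L^\top])$; universality then follows from the MLP + Dense case. The main obstacle I expect is precisely the MLP + Dense factorization, namely establishing the orbit lemma (that the fibers of $\mu$ over the full-rank domain are exactly the $\GL(r)^L$-orbits) and then deducing continuity of the descended map $g$ from the quotient-map property. A secondary subtlety, in the $\O$-Align case, is verifying that $\rho(K)$ stays within a compact full-rank set so that the MLP approximation of $f$ is valid on its image.
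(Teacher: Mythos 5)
Your proposal is correct and follows the same overall decomposition as the paper: plain universal approximation for MLP, $\O$-invariance of $f$ for MLP + $\O$-Align, factoring $f$ through the multiply-out map for MLP + Dense via an orbit-separation lemma, and reducing \GLNet to MLP + Dense by taking identity equivariant layers. The differences are in how two steps are justified. For MLP + Dense, the paper establishes the same orbit lemma (its Lemma~\ref{lemma:dense_separates_orbits}, proved by cancelling Gram matrices rather than by your column-space argument, though both are valid for full-rank factors) and then invokes Proposition~1.3 of \citet{dym2024low} to obtain a continuous $g$ with $f = g \circ \mu$; you instead prove this factorization directly by observing that a continuous surjection between compact Hausdorff spaces is a quotient map, which makes the argument self-contained at no real cost. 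For MLP + $\O$-Align, the paper asserts that the canonicalization map $f^{\mathrm{align}}$ is continuous and hence has compact image; your version sidesteps this by only requiring that $\rho(K)$ sit inside some compact full-rank set $K'$ (controlled via singular-value and norm bounds preserved under right multiplication by orthogonal matrices) and controlling the approximation error on the image side. This is arguably the more careful route, since continuity of the SVD-based canonicalizer is delicate at inputs where the relevant singular values coincide, a point the paper's invariance proof acknowledges only up to a measure-zero set; your handling also correctly notes that $f(\rho(\cdot)) = f(\cdot)$ holds for any choice of canonicalizer by $\O$-invariance, so non-uniqueness of $\rho$ is harmless.
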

The universality of MLP and MLP + O-Align models follows from the universal approximation thereom for MLPs \citep{Hornik1989Multilayer}. To prove Theorem~\ref{thm:mlp_mul_universality_formal} for MLP + Dense and $\GL$-net we use the following result from \citet{dym2024low}. 

\begin{proposition}[Proposition 1.3 from \citet{dym2024low}]\label{prop:orbit_sep}
    Let $\mathcal{M}$ be a topological space, and $G$ a group which acts on $\mathcal{M}$. Let $K \subset \mathcal{M}$ be a compact set, and let $f^{\mathrm{inv}} : \mathcal{M} \to \RR^N$ be a continuous $G$-invariant map that separates orbits. Then for every continuous invariant function $f : \mathcal{M} \to \RR$ there exists some continuous $f^{\mathrm{general}}: \RR^N \to \RR$ such that $f(x) = f^\mathrm{general}(f^{\mathrm{inv}}(x))$, $\forall x \in K$. 
\end{proposition}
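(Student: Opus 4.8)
The plan is to construct $f^{\mathrm{general}}$ directly on the image $f^{\mathrm{inv}}(K) \subseteq \RR^N$ and then extend it to all of $\RR^N$. First I would define $f^{\mathrm{general}}$ on $f^{\mathrm{inv}}(K)$ by setting $f^{\mathrm{general}}(z) = f(x)$ for any $x \in K$ with $f^{\mathrm{inv}}(x) = z$. The key point is that this is \emph{well defined}: if $f^{\mathrm{inv}}(x) = f^{\mathrm{inv}}(x')$ for $x, x' \in K$, then because $f^{\mathrm{inv}}$ separates orbits, $x$ and $x'$ lie in the same $G$-orbit, say $x' = g \cdot x$; since $f$ is $G$-invariant, $f(x') = f(x)$, so the value assigned to $z$ is independent of the chosen preimage. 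By construction $f(x) = f^{\mathrm{general}}(f^{\mathrm{inv}}(x))$ for all $x \in K$, which is exactly the desired factorization — the only remaining issue is continuity.

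The main obstacle is establishing continuity of $f^{\mathrm{general}}$ on $f^{\mathrm{inv}}(K)$, and I would handle this by a quotient-map argument that uses compactness crucially. Write $q := f^{\mathrm{inv}}|_K : K \to f^{\mathrm{inv}}(K)$, a continuous surjection. Since $K$ is compact, every closed subset of $K$ is compact, its image under $q$ is compact, and a compact subset of the Hausdorff space $\RR^N$ is closed; hence $q$ is a closed map, and a closed continuous surjection is a quotient map. The universal property of quotient maps then says that $f^{\mathrm{general}} : f^{\mathrm{inv}}(K) \to \RR$ is continuous if and only if $f^{\mathrm{general}} \circ q$ is continuous. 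But $f^{\mathrm{general}} \circ q = f|_K$ is continuous, so $f^{\mathrm{general}}$ is continuous on $f^{\mathrm{inv}}(K)$. This is the crux: without compactness of $K$ (to obtain the closed/quotient-map property) or Hausdorffness of the target, the naive pointwise definition need not yield a continuous map, so I would make sure to state exactly where each hypothesis enters.

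Finally I would extend $f^{\mathrm{general}}$ from the compact set $f^{\mathrm{inv}}(K)$ to all of $\RR^N$. Since $f^{\mathrm{inv}}(K)$ is compact and therefore closed in $\RR^N$, and $\RR^N$ is normal, the Tietze extension theorem provides a continuous $f^{\mathrm{general}} : \RR^N \to \RR$ agreeing with the previously constructed map on $f^{\mathrm{inv}}(K)$. This extended map still satisfies $f = f^{\mathrm{general}} \circ f^{\mathrm{inv}}$ on $K$, completing the argument. I would remark that none of the steps require $\mathcal{M}$ itself to be Hausdorff or metrizable; compactness of $K$ together with the Hausdorff target $\RR^N$ is what makes the quotient and extension steps go through, which is precisely why the statement can be phrased for a general topological space $\mathcal{M}$.
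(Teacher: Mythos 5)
Your proof is correct: well-definedness of $f^{\mathrm{general}}$ on $f^{\mathrm{inv}}(K)$ follows exactly as you say from orbit separation plus $G$-invariance, the closed-map/quotient argument (closed subsets of compact $K$ are compact, compact images are closed in the Hausdorff space $\RR^N$) legitimately yields continuity, and Tietze extension off the compact, hence closed, image finishes the job. Note that the paper does not prove this statement itself --- it imports it verbatim as Proposition 1.3 of \citet{dym2024low} --- and your argument is essentially the standard proof given in that cited source, so there is nothing to flag.
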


In order to use the proposition above for MLP + Dense LoL models, we need to show that multiplying out the LoRA updates separates $\GL$ orbits. 

\begin{lemma}\label{lemma:dense_separates_orbits}
    The function
    \begin{equation*}
        f^{\mathrm{mul}}(U_1, V_1, \dots, U_L, V_L) = (U_1 V_1^\top, \dots, U_L V_L^\top),
    \end{equation*}
    separates $\GL$-orbits in $\mathcal{D}$. That is, if $(U_1, V_1, \dots, U_L, V_L)$ and $(U_1', V_1', \dots, U_L', V_L')$ are in different $G$-orbits then $f^{\mathrm{mul}}(U_1, V_1, \dots, U_L, V_L) \neq f^{\mathrm{mul}}(U_1', V_1', \dots, U_L', V_L')$.
\end{lemma}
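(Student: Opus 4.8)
The plan is to prove the contrapositive: if $f^{\mathrm{mul}}$ agrees on two full-rank tuples, then they lie in the same $\GL(r)^L$-orbit. Because both the group action and $f^{\mathrm{mul}}$ factor over the $L$ layers independently, I would first reduce to a single layer. It suffices to show that for full-rank $U, U' \in \RR^{n \times r}$ and $V, V' \in \RR^{m \times r}$ with $U V^\top = U'(V')^\top =: W$, there exists $R \in \GL(r)$ such that $U' = UR$ and $V' = V R^{-\top}$. Collecting the per-layer matrices into $\mathbf{R} = (R_1, \dots, R_L)$ then exhibits the required orbit equivalence.

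The key geometric observation is that the full-rank hypothesis pins down the column space of the product. Since $V^\top$ has full row rank it is surjective onto $\RR^r$, so $\mathrm{col}(W) = \mathrm{col}(U)$; the same argument applied to $(U', V')$ gives $\mathrm{col}(W) = \mathrm{col}(U')$. Hence $U$ and $U'$ are two $n \times r$ matrices of full column rank with the same $r$-dimensional column space, so they represent injective linear maps $\RR^r \to \RR^n$ with identical image. Such maps differ by an invertible change of basis on the source, i.e. there is a (unique) $R \in \GL(r)$ with $U' = UR$.

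It remains to recover the matching relation for $V$, and this is where injectivity of $U$ does the work. Substituting $U' = UR$ into $U V^\top = U'(V')^\top$ gives $U\bigl(V^\top - R(V')^\top\bigr) = 0$. Because $U$ has full column rank it is injective, forcing $V^\top = R(V')^\top$, equivalently $V' = V R^{-\top}$. Thus a single matrix $R$ simultaneously relates $U$ to $U'$ and $V$ to $V'$, which is exactly the $\GL(r)$ action; this completes the single-layer case and hence the lemma.

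I expect the main obstacle to be making precise the two places where full rank is essential and cannot be dropped: that $\mathrm{col}(UV^\top) = \mathrm{col}(U)$, which requires surjectivity of $V^\top$, and the cancellation of $U$, which requires its injectivity. Both fail on the rank-deficient locus, which is precisely why the lemma is stated on $\mathcal{D}$. Stating these two rank facts carefully, rather than the mere existence of $R$ (routine linear algebra once the column spaces agree), is the crux of the argument.
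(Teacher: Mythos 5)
Your proof is correct and follows the same overall strategy as the paper's --- prove the contrapositive, work layer by layer, and exhibit an explicit $R \in \GL(r)$ --- but the way you obtain $R$ and certify its invertibility is genuinely different and somewhat cleaner. The paper constructs $R_i = V_i'^\top V_i (V_i^\top V_i)^{-1}$ by right-multiplying the identity $U_i V_i^\top = U_i' V_i'^\top$ by $V_i(V_i^\top V_i)^{-1}$, and then must separately verify $R_i \in \GL(r)$ by an explicit chain of Gram-matrix manipulations that exhibits a left inverse of $V_i'^\top V_i$. You instead note that $\mathrm{col}(UV^\top) = \mathrm{col}(U) = \mathrm{col}(U')$ (using surjectivity of $V^\top$ and of $(V')^\top$), so that $U$ and $U'$ are injective with the same image and hence $U' = UR$ for a unique \emph{invertible} $R$ in a single step; injectivity of $U$ then cancels it from $U\bigl(V^\top - R(V')^\top\bigr) = 0$ to give $V' = VR^{-\top}$. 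Your route isolates exactly where each half of the full-rank hypothesis is used (surjectivity of $V^\top$ for the column-space identification, injectivity of $U$ for the cancellation), which is the content your last paragraph correctly flags as the crux; the paper's route has the minor advantage of producing a closed-form expression for $R_i$ in terms of the data. Both arguments are complete and correct.
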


\begin{proof}
    We prove the contrapositive. Let $(U_1, V_1, \dots, U_L, V_L)$ and $(U_1', V_1', \dots, U_L', V_L')$ be LoRA updates such that $f^{\mathrm{mul}}(U_1, V_1, \dots, U_L, V_L) = f^{\mathrm{mul}}(U_1', V_1', \dots, U_L', V_L')$, i.e. $\forall i \in \{1, \dots, L\}$
    \begin{equation}\label{eq:equal_products}
        U_i V_i^\top = U_i' V_i'^\top.
    \end{equation}
    Since $U_i$, $V_i$, $U_i'$, and $V_i'$ are of rank $r$, their corresponding Gram matrices $U_i^\top U_i$, $V_i^\top V_i$, $U_i'^\top U_i', V_i'^\top V_i' \in \RR^{r \times r}$, are also of rank $r$ and are thus invertible. Multiplying both sides of Equation~\ref{eq:equal_products} by $V_i(V_i^\top V_i)^{-1}$ from the right, we get
    \begin{equation}
            U_i \cancel{V_i^\top V_i(V_i^\top V_i)^{-1}} = U_i' \underbrace{V_i'^\top V_i(V_i^\top V_i)^{-1}}_{R_i}. 
    \end{equation}
    Substituting $U_i' R_i$ back to Equation~\ref{eq:equal_products} we get
    \begin{equation*}
        U_i' R_i V_i^\top = U_i' V_i'^\top.
    \end{equation*}
    Multiplying by $(U_i'^\top U_i')^{-1}U_i'^\top$ from the left gives
    \begin{equation*}
        \cancel{(U_i'^\top U_i')^{-1}U_i'^\top U_i'} R_i V_i^\top = \cancel{(U_i'^\top U_i')^{-1}U_i'^\top U_i'} V_i'^\top.
    \end{equation*}
    Therefore, to prove $(U_1, V_1, \dots, U_L, V_L)$ and $(U_1', V_1', \dots, U_L', V_L')$ are in the same orbit all we need to do is show that $R_i \in \GL(r)$. To do so it's enough to show that $V_i'^\top V_i$ is invertible. And indeed, starting from Equation\ref{eq:equal_products}
    \begin{equation}
    \begin{aligned}
        U_i V_i^\top &= U_i' V_i'^\top \\
        &\hspace{-2em}\textit{\small(Multiply both sides by $(U_i^\top U_i)^{-1}U_i^\top$ from the left)} \\[0.5em]
        (U_i^\top U_i)^{-1}U_i^\top U_i V_i^\top &= (U_i^\top U_i)^{-1} U_i^\top  U_i' V_i'^\top \\
        &\hspace{-2em}\textit{\small(Simplify left side: $(U_i^\top U_i)^{-1}U_i^\top U_i = I$)} \\[0.5em]
        V_i^\top &= (U_i^\top U_i)^{-1} U_i^\top  U_i' V_i'^\top \\
        &\hspace{-2em}\textit{\small(Multiply both sides by $V_i$ from the right)} \\[0.5em]
        V_i^\top V_i &= (U_i^\top U_i)^{-1} U_i^\top  U_i' V_i'^\top V_i \\
        &\hspace{-2em}\textit{\small(Multiply both sides by $(V_i^\top V_i)^{-1}$ from the left)} \\[0.5em]
        I &= (V_i^\top V_i)^{-1}(U_i^\top U_i)^{-1} U_i^\top  U_i' V_i'^\top V_i.
    \end{aligned}
\end{equation}
    Therefore, $R_1, \dots, R_L \in \GL(r)$, $U_i = U_i' R$, and $V_i= V_i' R_i^{-\top}$, implying that $(U_1, V_1, \dots, U_L, V_L)$ and $(U_1', V_1', \dots, U_L', V_L')$ are in the same $G$-orbit.
\end{proof}

We are now ready to prove Theorem~\ref{thm:mlp_mul_universality_formal}.
\begin{proof}[Proof of theorem~\ref{thm:mlp_mul_universality_formal}]
    Let $f: \mathcal{D} \to \RR$ be a continuous $\GL$-invariant function, let $K \subset \mathcal{D}$ be a compact set and fix $\epsilon > 0$. 
    \begin{enumerate}
        \item \textbf{MLP.} Since $K$ is compact and $f$ is continuous, universality follows from the universal approximation theorem for MLPs \citep{Hornik1989Multilayer}.

        \item \textbf{MLP + O-Align.} Let $f^{\mathrm{align}}$ be the O-canonicalization function. $f^{\mathrm{align}}$ is continuous so $f^{\mathrm{align}}(K)$ is compact. and let $f^{\mathrm{MLP}}$ be an MLP that approximates $f$ up-to $\epsilon$ on $f^{\mathrm{align}}(K)$.
        \begin{equation*}
            \begin{split}
            \sup_{\mX \in K}\lvert f^{\mathrm{MLP}}(f^{\mathrm{align}}(\mX)) - f(\mX) \rvert &= \sup_{\mX \in K}\lvert f^{\mathrm{MLP}}(f^{\mathrm{align}}(\mX)) - f(f^{\mathrm{align}}(\mX)) \rvert \\
            &= \sup_{\mY \in f^{\mathrm{align}}(K)}\lvert f^{\mathrm{MLP}}(\mY) - f(\mY) \rvert < \epsilon. 
            \end{split}
        \end{equation*}
        The first equality holds since $f$ is $\GL$-invariant, and in particular $\O$-invariant.
        
        \item \textbf{MLP + Dense.} From Lemma~\ref{lemma:dense_separates_orbits} we know that $f^{\mathrm{mul}}$ separates orbits. It's additionally clear that $f^{\mathrm{mul}}$ is continuous and $G$-invariant. Therefore, using Proposition~\ref{prop:orbit_sep} there exists a function $f^{\mathrm{general}}: \RR^N \to \RR$ such that $f \equiv f^{\mathrm{general}} \circ f^{\mathrm{mul}}$ on $K$. Since $f^{\mathrm{mul}}$ is continuous, $f^{\mathrm{mul}}(K)$ is also compact and we can use the universal approximation theorem of MLPs for $f^{\mathrm{genral}}$ on $f^{\mathrm{mul}}(K)$. Therefore, there exists an MLP $f^{\mathrm{MLP}}$ such that 
        \begin{equation}
            \begin{split}
                \sup_{\mX \in K}\lvert f^{\mathrm{MLP}}(f^{\mathrm{mul}}(\mX)) - f(\mX) \rvert &= \sup_{\mX \in K}\lvert f^{\mathrm{MLP}}(f^{\mathrm{mul}}(\mX)) - f^{\mathrm{general}}(f^{\mathrm{mul}}(\mX)) \rvert \\
                &= \sup_{\mY \in f^{\mathrm{mul}}(K)}\lvert f^{\mathrm{MLP}}(\mY) - f^{\mathrm{general}}(\mY) \rvert < \epsilon    
            \end{split}
        \end{equation}
        
        \item \textbf{GL-net.} Since we proved MLP + Dense is universal, it's enough to show that \GLNet can implement MLP + Dense. If we take a \GLNet with no equivariant layers (or equivalently a single equivariant layer that implements the identity by setting $\bPhi_i = I_{n_i}$, $\bPsi_i = I_{m_i}$) and apply the invariant head directly to the input, the resulting model is exactly MLP + Dense. 
    \end{enumerate}
\end{proof}

\section{Experimental Details}
\subsection{Diffusion Model LoRA Datasets}

\subsubsection{CelebA Finetuning}\label{appendix:celebA}

\begin{table}[ht]
    \centering
    \caption{Hyperparameter distributions for the 3,900 different LoRA diffusion model finetunes we trained. $U(S)$ denotes the uniform distribution over a set $S$.}
    \begin{tabular}{ll}
    \toprule
        Hyperparameter & Distribution  \\
        \midrule
         Learning rate & $U(\{10^{-4}, 3\cdot 10^{-4}, 10^{-3}, 3\cdot 10^{-3}\})$ \\
         Train Steps & $U(\{100, 133, 167, 200\})$ \\
         Batch Size & $U(\{1, 2\})$ \\
         Prompt & $U(\{\text{``Celebrity"}, \text{``Person"}, \text{``thing"}, \text{``skd"}\})$ \\
         Rank & 4 \\
         \bottomrule
    \end{tabular}
    \label{tab:diffusion_celeba_hparams}
\end{table}

We finetune 3,900 models on various celebrities in the CelebA dataset~\citep{liu2015faceattributes} using the DreamBooth personalization method~\citep{ruiz2023dreambooth}. Each LoRA is personalized to one celebrity by finetuning on 21 images of that celebrity. Every LoRA is trained starting from a different random initialization, with hyperparameters randomly sampled from reasonable distributions --- see Table~\ref{tab:diffusion_celeba_hparams} for the distributions. 

\paragraph{CLIP score prediction.} For CLIP score prediction, we use the following prompts, the first thirty of which are from PartiPrompts \citet{yu2022scalingautoregressivemodelscontentrich}, and the last three of which are written to include words relevant to prompts the models are finetuned on. We use a fixed random seed of 42 for image generation. For predicting CLIP scores, \GLNet takes the mean of each matrix product $U_iV_i^\top$ in the invariant head. This is equivalent to using an equivariant hidden dimension of $1$.
\begin{quote}
1. 'a red sphere on top of a yellow box',
    
2. 'a chimpanzee sitting on a wooden bench',
    
3. 'a clock tower',
    
4. 'toy cars',
    
5. 'a white rabbit in blue jogging clothes doubled over in pain while a turtle wearing a red tank top dashes confidently through the finish line',
    
6. 'a train going to the moon',
    
7. 'Four cats surrounding a dog',
    
8. 'the Eiffel Tower in a desert',
    
9. 'The Millennium Wheel next to the Statue of Liberty. The Sagrada Familia church is also visible.',
    
10. 'A punk rock squirrel in a studded leather jacket shouting into a microphone while standing on a stump and holding a beer on dark stage.',
    
11. 'The Statue of Liberty surrounded by helicopters',
    
12. 'A television made of water that displays an image of a cityscape at night.',
    
13. 'a family on a road trip',
    
14. 'the mona lisa wearing a cowboy hat and screaming a punk song into a microphone',
    
15. 'a family of four posing at Mount Rushmore',
    
16. 'force',
    
17. 'an oil surrealist painting of a dreamworld on a seashore where clocks and watches appear to be inexplicably limp and melting in the desolate landscape. a table on the left, with a golden watch swarmed by ants. a strange fleshy creature in the center of the painting',
    
18. 'Downtown Austin at sunrise. detailed ink wash.',
    
19. 'A helicopter flies over Yosemite.',
    
20. 'A giraffe walking through a green grass covered field',
    
21. 'a corgi’s head',
    
22. 'portrait of a well-dressed raccoon, oil painting in the style of Rembrandt',
    
23. 'a volcano',
    
24. 'happiness',
    
25. "the words 'KEEP OFF THE GRASS' on a black sticker",
    
26. 'A heart made of wood',
    
27. 'a pixel art corgi pizza',
    
28. 'two wine bottles',
    
29. 'A funny Rube Goldberg machine made out of metal',
    
30. 'a horned owl with a graduation cap and diploma',
    
31. 'A celebrity in a park',
    
32. 'A person on the beach',
    
33. 'A thing in a city'
\end{quote}

\paragraph{CelebA Attribute Prediction.} As mentioned in Section~\ref{sec:diffusion_data_prediction}, we only predict the five least noisy CelebA attributes, as measured by~\citet{lingenfelter2022quantitative}. %
Recall that each LoRA in CelebA-LoRA is trained on 21 images of a given celebrity. The attribute labels can vary across these 21 images, so we say the ground truth attribute label of the LoRA is the majority label across these 21 images.

\subsubsection{ImageNette Finetuning}
For Imagenette~\citep{howard2019imagenette}, we finetune 2,046 models. In particular, for each nonempty subset of the 10 Imagenette classes, we finetune 2 models using 1 image from each present class. We use a rank of 32, in part so that we can test how well LoL models perform on larger ranks than the other experiments.
\begin{table}[ht]
    \centering
    \caption{Hyperparameters for the dataset of 2,046 different LoRA diffusion model finetunes we trained on Imagenette (Imagenette-LoRA).}
    \begin{tabular}{ll}
    \toprule
        Hyperparameter & Value  \\
        \midrule
         Learning rate & $3\cdot 10^{-4}$ \\
         Train Steps & $150$ \\
         Batch Size & $1$ \\
         Prompt & $sks\_photo$ \\
         Rank & 32 \\
         
         \bottomrule
    \end{tabular}
    \label{tab:diffusion_imagenette_hparams}
\end{table}
\subsection{Language Model LoRA Dataset}\label{appendix:llm_lora}

Here, we describe the details of our Qwen2-ARC-LoRA dataset of trained language model LoRAs.

For training data, we use the ARC training set~\citep{clark2018think}, using both the easy and challenge splits. First, we hold out a fixed validation set sampled from this set to compute the validation loss on. Each training data point originates from one of 20 sources (e.g. some questions come from Ohio Achievement Tests, and some come from the Virginia Standards of Learning). For each LoRA finetuning run, we sample a random subset of these sources to use as training data (except we do not ever omit the Mercury source, which contains many more data points than the other sources). To do this, we sample a random integer in $s \in [1,19]$, then choose a random size-$s$ subset of the 19 possibly-filtered sources to drop.

\begin{table}[ht]
    \centering
    \caption{Hyperparameter distributions for the 2,000 different LoRA language model finetunes we trained (Qwen2-ARC-LoRA). $U(S)$ denotes the uniform distribution over a set $S$.}
    \begin{tabular}{ll}
    \toprule
        Hyperparameter & Distribution  \\
        \midrule
         Learning rate & $10^{U([-5, -3])}$ \\
         Weight decay & $10^{U([-6, -2])}$ \\
         Epochs & $U(\{2,3,4\})$ \\
         Batch Size & $U(\{32, 64, 128\})$ \\
         LoRA Dropout & $U([0, .1])$ \\
         Filtered sources & $U(\text{sources})$ \\
         \bottomrule
    \end{tabular}
    \label{tab:llm_lora_hparams}
\end{table}

For each LoRA finetuning run, we sample random hyperparameters: learning rate, weight decay, number of epochs, batch size, LoRA dropout, and the data sources to filter. The distributions from which we sample are shown in Table~\ref{tab:llm_lora_hparams}, and were chosen to give reasonable but varied performance across different runs. All trained LoRAs were of rank 4, and we only applied LoRA to tune the key and value projection matrices of each language model.

\subsection{Dataset size prediction}\label{appendix:size_prediction_details}
Here we describe theoretical and experimental details relating to the implicit strategies that \GLNet and MLP + SVD may employ in dataset size prediction as in Table~\ref{tab:size_prediction}. 

Suppose that a model $f_{\mathrm{class}}$ trained to predict Imagenette class presence of LoRAs (as in Table~\ref{tab:data_prediction}) has test accuracy $p$, while each of the 10 classes is present with probability $.5$. For LoRA weights $x$, $f_{\mathrm{class}}(x)_i = 1$ if the model predicts that class $i$ is in the finetuning dataset of $x$. Define the corresponding dataset-size prediction model $f_{\mathrm{size}}(x) = \sum_{i=1}^{10} f_{\mathrm{class}}(x)_i$ That is, $f_{\mathrm{size}}$ predicts whether each class is in the dataset, sums up these predictions, and then outputs the sum as the expected dataset size. 

Assuming $f_{\mathrm{class}}$ is equally likely to provide false-negative or false-positive predictions for each class, each of its 10 outputs has probability $1-p$ of being incorrect. Consider an input LoRA $x$ with a dataset size $s$. Then $f_{\mathrm{size}}(x) = \hat{y}_1 + \hat{y}_2 + \dots \hat{y}_{10}$, where $\hat{y}_i$ is equal to $y_i$ with probability $p$, and $1- y_i$ with probability $1-p$. So, 
\begin{equation}
    f_{\mathrm{size}}(x) = s \iff \vert \{ i\ \vert\ y_i = 1 \land \hat{y}_i = 0\} \vert = \vert \{ i\ \vert\ y_i = 0 \land \hat{y}_i = 1\} \vert
\end{equation} The cardinality of the left set is distributed as $\mathrm{binom}(s, 1-p)$, while the cardinality of the right set is distributed as $\mathrm{binom}(10 - s, 1-p)$. So, $\PP(f_{\mathrm{size}}(x) = s \vert s) = \PP(\beta_1 = \beta_2)$, for $\beta_1 \sim \mathrm{binom}(s, 1-p)$ and $\beta_2 \sim \mathrm{binom}(10 - s, 1-p)$. Thus,
\begin{equation}\label{eq:size}
    \PP_{(x, s) \sim \mathrm{data}}(f_{\mathrm{size}}(x) = s) = \sum_{\eta=0}^{10}[\PP\left(\mathrm{binom}(\eta, 1-p) = \mathrm{binom}(10 - \eta, 1-p)\ \vert\ \eta\right) \cdot \PP(\eta)]
\end{equation}

Table \ref{tab:data_prediction} shows that for \GLNet, $f_{\mathrm{class}}$ is correct with probability $p = .904$. So, we have $1-p = .096$.

Using a Python script to evaluate \eqref{eq:size}, we find that the probability that $f_{\mathrm{size}}(x)$ is correct is $46.1 \%$, which almost exactly matches the observed probability of $46.8\%$ that we obtain from training \GLNet to predict the dataset size in Table~\ref{tab:size_prediction}. 

We further test our hypothesis empirically by training \GLNet on Imagenette class prediction (as in Table~\ref{tab:data_prediction}) and then summing up its class predictions to predict dataset size. On the Imagenette size-predictoin test set, \GLNet with this sum strategy achieves an accuracy of $43.5 \pm 2.1 \%$. This means \GLNet is likely learning a function with underlying mechanism similar to $f_\mathrm{size}$, where it predicts the presence of each class and then sums those predictions to output the total dataset size.

On the other hand, MLP + SVD correctly predicts class presence with probability $p = .656$. Predicting classes and summing up its individual predictions would give MLP + SVD a theoretical accuracy of $21\%$ by equation \ref{eq:size}, which is significantly worse than its true performance of $73.1\%$. This suggests that MLP + SVD likely looks a characteristics other than class presence to determine dataset size.

\section{LoRA Variants and Their Symmetries}\label{appendix:lora_variants}

In this section, we describe various LoRA variants that have been proposed for parameter-efficient finetuning. We also discuss their symmetries, and how one may process these LoRA variants with LoL models.

\paragraph{Training Modifications.} Several LoRA variants such as PiSSA~\citep{meng2024pissa} and LoRA+~\citep{hayou2024lora} have the same type of weight decomposition as the original LoRA, but with different initialization or training algorithms. These variants have the same exact symmetries as standard LoRA, so they can be processed in exactly the same way with our LoL models.

\paragraph{DoRA (Weight-Decomposed Low-Rank Adaptation).}~\citet{liu2024dora} decompose the parameter updates into magnitude and directional components. For base weights $W \in \RR^{n \times m}$, DoRA finetuning learns the standard low rank weights $U \in \RR^{n \times r}$ and $V \in \RR^{m \times r}$ along with a magnitude vector $\mathbf{m} \in \RR^{m}$ so that the new finetuned weights are given by
\begin{equation}
 (W + UV^\top) \mathrm{Diag}\left(\frac{\mathbf{m}}{\norm{W + UV^\top}_c}\right),
\end{equation}
where $\norm{\cdot}_c : \RR^{n \times m} \to \RR^m$ is the norm of each column of the input matrix, the division $\frac{\mathbf{m}}{\norm{W + UV^\top}_c}$ is taken elementwise, and $\mathrm{Diag}$ takes vectors in $\RR^m$ to diagonal matrices in $\RR^{m \times m}$. The vector $\mathbf{m}$ represents the norm of each column of the finetuned matrix. DoRA weights $(U, V, \mathbf{m})$ also have the same invertible matrix symmetry as standard LoRA, where $(UR, VR^{-\top}, \mathbf{m})$ is functionally equivalent to $(U, V, \mathbf{m})$. The $\mathbf{m}$ vector cannot in general be changed without affecting the function. Thus, an LoL model could take as input $\mathbf{m}$ as an invariant feature, for instance by concatenating it to features in an invariant head of \GLNet, or concatenating it to the input of an MLP in the other LoL models.

\paragraph{KronA (Kronecker Adapter).}~\citet{edalati2022krona} use a Kronecker product structure to decompose the learned weight matrix in a parameter-efficient way. For a weight matrix $W$ of shape $n \times m$, LoKr learns $U \in \RR^{n' \times m'}, V \in \RR^{n'' \times m''}$ such that the finetuned weight is given by $W + U \otimes V$. Here, we no longer have a large general linear symmetry group. Instead, there is a scale symmetry, where $(U, V)$ is equivalent to $(sU, \frac{1}{s} V)$ for a scalar $s \in \RR \setminus\{0\}$, since $(sU) \otimes (\frac{1}{s} V) = U \otimes V$. For LoL tasks, we can use a scale-invariant architecture, as for instance explored by \citet{kalogeropoulos2024scale}. We can also use \GLNet on the flattened $\mathrm{vec}(U) \in \RR^{n'm' \times 1}$ and $\mathrm{vec}(V) \in \RR^{n''m'' \times 1}$, since $\RR \setminus \{0\} = \GL(1)$ is the general linear symmetry group in the special case of rank $r=1$.

\end{document}